
\documentclass[preprint,12pt]{elsarticle}




\usepackage{amssymb}
\usepackage{graphicx,xcolor}

\usepackage{pifont}
\usepackage{tabu}
\usepackage{booktabs}       
\usepackage{amsfonts}       
\usepackage{nicefrac}       
\usepackage{microtype}      
\usepackage{amsmath}
\usepackage{mathtools}
\usepackage{latexsym}
\usepackage{multirow}
\usepackage{xspace}
\usepackage{tabularx}

\usepackage{url}
%

\usepackage{color}
\usepackage{amsmath, amssymb}
\usepackage{type1cm}
\usepackage{comment}
\usepackage{amsmath}
\usepackage{amsfonts}
\usepackage{bm}
\usepackage{mathtools}
\usepackage{tikz}

\usepackage{colortbl}

\usetikzlibrary{shapes,positioning,calc,arrows,automata,intersections,matrix,decorations}
\usepgflibrary{decorations.pathmorphing}

\usepackage[left]{lineno}

\setlength{\tabcolsep}{4pt}


\newcommand{\argmin}{\mathop{\rm argmin}\limits}

\newcommand{\Rset}{\mathbb{R}}
\newcommand{\mat}[1]{\boldsymbol{\mathbf{#1}}}

\newcommand{\sign}{\mathop{\text{sign}}}
\newcommand{\transpose}{^{\mathrm{T}}}
\newcommand{\binary}{^{(\text{b})}}
\newcommand{\diag}{\mathop{\text{diag}}}

\let\mytablefont\scriptsize


\definecolor{kblue}{HTML}{00AFEC}
\definecolor{kgreen}{HTML}{AACF52}
\definecolor{kyellow}{HTML}{FFF67F}
\definecolor{kred}{HTML}{FF9D00}
\definecolor{green}{HTML}{008000}

\usepackage{tikz}
\usetikzlibrary{calc,arrows,automata,positioning,fit} 

\newcommand{\tikznode}[2]{\relax
\ifmmode%
  \tikz[remember picture,baseline=(#1.base),inner sep=0pt] \node (#1) {$
  #2$};
\else
  \tikz[remember picture,baseline=(#1.base),inner sep=0pt] \node (#1) {#2};%
\fi}

\usepackage{blkarray}

\newcommand*{\colorboxed}{}
\def\colorboxed#1#{%
  \colorboxedAux{#1}%
}
\newcommand*{\colorboxedAux}[3]{%
  \begingroup
    \colorlet{cb@saved}{.}%
    \color#1{#2}%
    \boxed{%
      \color{cb@saved}%
      #3%
    }%
  \endgroup
}
\newcommand{\transposeb}{^{(b)\mathrm{T}}}
\usepackage{amsthm}
\theoremstyle{definition}
\newtheorem{definition}{Definition}
\newtheorem{lemma}{Lemma}
\newtheorem{corollary}{Corollary}
\newtheorem{thm}{Theorem}

\usepackage{enumitem}




\journal{Knowledge-Based Systems}

\begin{document}

\begin{frontmatter}



 \title{Binarized Canonical Polyadic Decomposition for Knowledge Graph Completion\tnoteref{ecir}}



\author[label1]{Koki Kishimoto\corref{cor1}\fnref{fn1}}
\ead{kishimoto@ei.sanken.osaka-u.ac.jp}

\author[label2,label4]{Katsuhiko Hayashi\fnref{fn1}}

\author[label1]{Genki Akai}

\author[label3,label4]{Masashi Shimbo}

\cortext[cor1]{Corresponding author}
	\address[label1]{Osaka University, Osaka, Japan}
	\address[label2]{The University of Tokyo, Tokyo, Japan}
	\address[label3]{Nara Institute of Science and Technology, Nara, Japan}
	\address[label4]{RIKEN Center for Advanced Intelligence Project}

 \tnotetext[ecir]{
   This paper extends the
   conference paper \cite{bcp} that appeared in European Conference on Information Retrival '19,
   mainly by adding the proof of full expressiveness of B-CP and 
   experiments on additional datasets.
}
\fntext[fn1]{These authors contributed equally to this work.}

\begin{abstract}
Methods based on vector embeddings of knowledge graphs have been actively pursued as a promising approach to knowledge graph completion.
However, embedding models generate storage-inefficient representations, particularly when the number of entities and relations, and
the dimensionality of the real-valued embedding vectors are large.
We present a binarized CANDECOMP/PARAFAC~(CP) 
decomposition algorithm, which we refer to as B-CP,
where real-valued parameters are replaced by binary values to reduce
model size.
Moreover, we show that a fast score computation technique can be
developed with bitwise operations.
We prove that B-CP is fully expressive by deriving a bound on the size
of its embeddings.
Experimental results on several benchmark datasets
demonstrate that the proposed method
successfully reduces model size by more than
an order of magnitude while maintaining task performance at the same level as the real-valued CP model.


\end{abstract}



\begin{keyword}



Knowledge graph completion  \sep Tensor factorization \sep Model compression
\end{keyword}

\end{frontmatter}


\section{Introduction}
\label{intro}
\begin{figure}[t]
\centering
\begin{tikzpicture}[xscale=3.5,yscale=3.4]
  \tikzstyle{every state}=[minimum size=2mm]

  \foreach \x/\y/\name/\dispname/\pos in {%
    0/0/actor1/Leonard Nimoy/below,
    1/0/movie1/Star Trek/below,
    0/1/char1/Spock/above,
    1.5/1/genre/SciFi/above,
    2/0/movie2/Star Wars/below,
    3/0/actor2/Alec Guinness/below,
    3/1/char2/Obi-Wan Kenobi/above%
  }{
    \path (\x,\y) node[state] (\name) {} node[\pos=.6em,font=\scriptsize] {\dispname};
  }

  \begin{scope}
    \tikzstyle{every node}=[font=\scriptsize]
    \path[->] (actor1)
    edge node[left] {played} (char1)
    edge node[above] {starredIn} (movie1) ;

    \path[->] (char1)
    edge node[above,sloped] {characterIn} (movie1) ;

    \path[->] (actor2)
    edge node[right] {played} (char2)
    edge node[above] {starredIn} (movie2) ;

    \path[->] (char2)
    edge node[above,sloped] {characterIn} (movie2) ;

    \path[<-] (genre)
    edge node[above,sloped] {genre} (movie1)
    edge node[above,sloped] {genre} (movie2) ;
  \end{scope}

\end{tikzpicture}
\caption{Example knowledge graph taken from~\cite{survey}}
\label{fig:kg}
\end{figure}
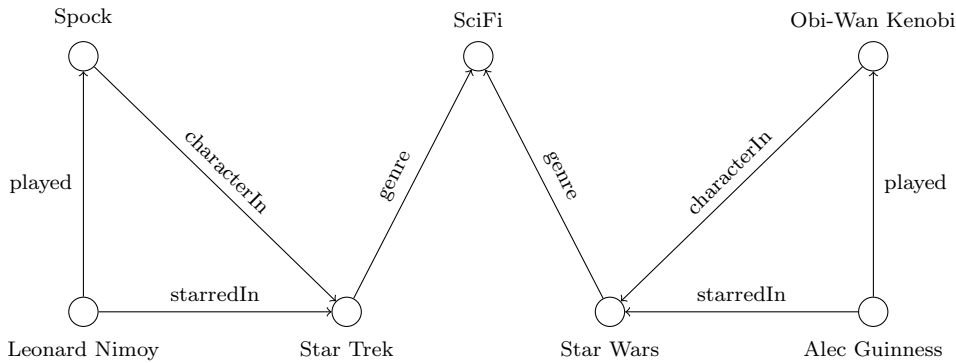
Knowledge graphs, such as YAGO~\cite{yago} and Freebase~\cite{freebase},
have proven useful in many applications such as question answering~\cite{qa}, dialog~\cite{diag}
and recommender~\cite{rec} systems.
A knowledge graph consists of triples $(e_i, e_j, r_k)$
each of which represents that relation $r_k$ holds between subject entity
$e_i$ and object entity $e_j$.
For example, in Figure~\ref{fig:kg},
there exists a triple (Leonard Nimoy, Star Trek, starredIn).
Although a typical knowledge graph may have billions of
triples, it is still far from complete.
Filling in the missing triples is of importance in carrying out
various inference over knowledge graphs.
\emph{Knowledge graph completion}~(KGC) aims to perform this task automatically.

In recent years, \emph{knowledge graph embedding} (KGE) 
has been actively investigated as a promising approach to KGC.
In KGE, entities and relations are embedded
in vector space,
and operations in this space are used 
to define a confidence score (or simply \emph{score}) function $\theta_{i j k}$
that approximates the truth value of a given triple $(e_i,e_j,r_k)$.
Although a variety of original KGE methods~\cite{transe,rescal,ntn,complex,conve}
have been proposed,
Kazemi and Poole~
\cite{simple} and
Lacroix et~al.~
\cite{cano} found
that 
a classical tensor factorization algorithm,
i.e., 
CAN\-DECOMP\slash PARA\-FAC (CP) 
decomposition~\cite{cp},
achieves the state-of-art performance on several benchmark datasets for KGC.

In CP decomposition of a knowledge graph, the confidence score $\theta_{i j k}$ for
a triple $(e_i,e_j,r_k)$ is calculated simply by 
$\mat{a}_{e_i}\transpose(\mat{b}_{e_j}\circ\mat{c}_{r_k})$
where $\mat{a}_{e_i}$, $\mat{b}_{e_j}$, and $\mat{c}_{r_k}$ denote the $D$-dimensional
vectors representing $e_i$, $e_j$, and $r_k$, respectively,
and $\circ$ is the Hadamard~(element-wise) product.
Despite the model's simplicity, it
needs to maintain $(2N_e+N_r)$ $D$-dimensional 32-bit or 64-bit valued
vectors, where $N_e$ and $N_r$ denote the number
of entities and relations, respectively.
Typical knowledge graphs contain an enormous number of entities and relations,
which leads to
significant memory requirements.
In fact, CP with $D=200$ applied to Freebase
will require approximately 66 GB of memory to store parameters.
Large memory consumption can be problematic when
KGC is run on resource-limited devices.
Moreover, the size of existing knowledge graphs is still growing rapidly, and
a method to shrink the embedding vectors is in strong demand.

To address this problem, we present a new CP decomposition
algorithm, which we refer to as B-CP,
to learn compact KGEs.
The basic idea is to introduce
a quantization function into the optimization problem.
This function forces the embedding vectors to be binary,
and optimization is performed with respect to the binarized vectors.
After training, the binarized embeddings can be used in place of the original vectors of floating-point numbers,
which drastically reduces the memory footprint of the resulting model.
In addition, the binary vector representation
contributes to the efficient computation of 
the dot product 
by using bitwise operations.
This fast computation 
allows
the proposed model to significantly
reduce the time
required to compute the confidence scores of triples.

\begin{table}[t]
\centering
\caption{Time complexity, score function, and full expressiveness of KGE models\protect\footnotemark}

\begin{tabular}{lccc}
\toprule
Model              & Time complexity & Score function                                                                     & Fully expressive \\\cmidrule(lr){1-4}
TransE             & $O(D)$          & $- \|\mat{a}_{e_i}+\mat{b}_{r_k}-\mat{a}_{e_j} \|$                                 &                  \\
RESCAL             & $O(D^2)$        & $\mat{a}_{e_i}\transpose\mat{B}_{r_k}\mat{a}_{e_j}$                                & \checkmark       \\ 
DistMult           & $O(D)$          & $\mat{a}_{e_i}\transpose(\mat{a}_{e_j}\circ\mat{b}_{r_k})$                         &                  \\
CP                 & $O(D)$          & $\mat{a}_{e_i}\transpose(\mat{b}_{e_j}\circ\mat{c}_{r_k})$                         & \checkmark       \\\cmidrule(lr){1-4}  
{\bf B-CP}         & $O(D)$          & $-h(\mat{{\bar a}}_{e_i},\text{XNOR}(\mat{{\bar b}}_{e_j}, \mat{{\bar c}}_{r_k}))$ & \checkmark       \\ 
\bottomrule
\end{tabular}
\label{tab:model_exp}
\end{table}
\footnotetext{See Section~\ref{sec:proposed}
for a detailed explanation of the score function and full expressiveness of the propsoed B-CP model.
Though the time complexity of B-CP is $O(D)$, in practice, its score computation is substantially faster than other models
because the Hamming distance function $h(\cdot, \cdot)$ and the $\text{XNOR}(\cdot, \cdot)$ operation can be computed using bitwise operations.}


Note that B-CP only improves the speed and memory footprint 
when predicting missing triples; 
it does not improve the speed and memory footprint 
when training a prediction model.
However,
the reduced memory footprint of the produced model 
enables KGC to be run on many affordable resource-limited devices (e.g., personal computers).
Unlike research-level benchmarks in which one is required to compute the scores of a small set of test triples,
completion of an entire knowledge graph potentially requires computing the scores of an enormous number of missing triples in an inherently sparse knowledge graph,
and
thus, improved memory footprints and reduced score computation time are of practical importance.
B-CP provides exactly these advantages.
\if0
The quantization technique has been commonly used in the community
of deep neural networks to shrink network components~\cite{bicon,surveyq}.
To the best of our knowledge, this technique has not been studied
in the field of tensor factorization.
The main contribution of this paper is that
we introduce the quantization function
to a tensor factorization model for the first time.
This is also the first study to investigate the benefits
of the quantization for KGC.
\fi

From a theoretical perspective, it is important for
a KGE model to have sufficient expressive power to
accurately represent knowledge graphs that contain several relation types~\citep{kgexp}.
Ideally, a KGE model needs to be \emph{fully expressive},
in the sense that,
for any knowledge graph,
there exists an assignment of values to the embeddings of entities and
relations that accurately reconstruct the knowledge graph.
In this paper, we prove the full expressivity of B-CP.
The overall results are summarized in Table~\ref{tab:model_exp}.

Experimental results on several KGC benchmark datasets
showed that, compared to the standard CP decomposition, B-CP
reduced the model size nearly 10- to 20-fold
compared to the standard CP decomposition
without a decrease in the KGC performance.
In addition,
B-CP speeds up score computation considerably by using bitwise operations.



\section{Related Work}
\label{sec:related}
\subsection{KGEs}
Approaches to KGE
can be classified as models
based on bilinear mapping, translation, and neural network-based transformation.

RESCAL~\cite{rescal} is a bilinear-based KGE method whose score function
is formulated as $\theta_{ijk}=\mat{a}_{e_i}^{\rm T}\mat{B}_{r_k}\mat{a}_{e_j}$,
where $\mat{a}_{e_i}, \mat{a}_{e_j} \in \Rset^D$ are vector representations
of entities $e_i$ and $e_j$, respectively,
and matrix $\mat{B}_{r_k} \in \Rset^{D\times D}$ represents a relation $r_k$.
Although RESCAL can output non-symmetric score functions,
each relation matrix $\mat{B}_{r_k}$ holds $D^2$ parameters.
This can be problematic both in terms of overfitting and computational cost.
Several methods that address this problem have been proposed recently.
DistMult~\cite{distmult} restricts the relation matrix to be
diagonal, $\mat{B}_{r_k} = \diag(\mat{b}_{r_k})$.
However, this form of function is necessarily symmetric in $i$ and $j$;
i.e., $\theta_{ijk}=\theta_{jik}$.
To reconcile efficiency and expressiveness,
Trouillon et~al.~(2016)~\cite{complex}
proposed ComplEx, 
which uses the complex-valued representations and a
Hermitian inner product to define the score function,
which, unlike DistMult, can be nonsymmetric in $i$ and $j$.
Hayashi and Shimbo~(2017)~\cite{eq} found that ComplEx is equivalent to another state-of-the-art KGE method, i.e.,
holographic embeddings~(HolE)~\cite{hole}.
ANALOGY~\cite{analogy} is a model that can be considered a hybrid of ComplEx and DistMult.
Manabe et~al.~(2018)~\cite{l1} reduced redundant ComplEx parameters
with L1 regularizers.
Lacroix et~al.~(2018)~\cite{cano} and
Kazemi and Pool~(2018)~\cite{simple} independently showed
that CP decomposition, which Kazemi and Pool refer to as SimplE in the paper 
\cite{simple}, achieves comparable
to that of 
other bilinear methods, such as ComplEx and ANALOGY.
To achieve this level of performance,
they introduced an ``inverse'' triple $(e_j,e_i,r_k^{-1})$ to the training data
for each existing triple $(e_i, e_j, r_k)$,
where $r_k^{-1}$ denotes the inverse relation of $r_k$.

TransE~\cite{transe} is the first KGE model based on vector translation.
It employs the principle ${\mat a}_{e_i}+{\mat b}_{r_k}\approx{\mat a}_{e_j}$
to define a distance-based score function $ \theta_{ijk} = - \|{\mat a}_{e_i}+{\mat b}_{r_k}-{\mat a}_{e_j}\|^{2}$.
TransE was recognized as too limited to
model complex properties (e.g., symmetric/reflexive/one-to-many/many-to-one relations) in knowledge graphs; consequently, 
many extended versions of TransE have been proposed
\cite{transr,stranse,transh}.

Neural-based models, such as Neural Tensor Network~(NTN)~\cite{ntn} and ConvE~\cite{conve},
employ non-linear functions to define a score function;
thus, neural-based models have better expressiveness.
However, compared to bilinear and translation approaches,
neural-based models require more complex operations to compute
interactions between a relation and two entities in vector space.

Note that the binarization technique proposed in this paper can be applied
to KGE models other than CP decomposition, such as those mentioned above.
Our choice of CP as the implementation platform only reflects the fact that
it is one of the strongest baseline KGE methods.

\subsection{Model Compression via Quantization}
Numerous recent publications have investigated methods to train quantized
neural networks to reduce model size
without performance degradation.
Courbariaux, Bengio, and David \cite{bicon} were the first to demonstrate that binarized neural networks
can achieve close to state-of-the-art results 
on datasets, such as MNIST and CIFAR-10~\cite{surveyq}.
Their BinaryConnect method uses the binarization function $Q_1(x)$
to replace floating-point weights of deep neural networks
with binary weights during forward and backward propagation.
Lam~(2018)~\cite{w2b} used the same quantization method as BinaryConnect
to learn compact word embeddings.
To binarize KGEs,
we also apply the quantization method to
the CP decomposition algorithm.
To the best of our knowledge, this technique has not been studied
in the field of tensor factorization.
The primary contribution of this study is that
we introduce a quantization function
to a tensor factorization model.
Note that this study is also the first to investigate the benefits
of quantization for KGC.

\subsection{Boolean Tensor Factorization}
Boolean tensor factorization was formally defined in the paper~\cite{boolean}.
Given a $K$-way boolean tensor
$\mat{\mathcal{X}}\in\{0,1\}^{N_1\times N_2\times\dots\times N_K}$,
the boolean CP decomposition~(boolean CP) factorizes the tensor
to $K$ boolean factor matrices $\mat{A}^{(k)}\in\{0,1\}^{N_k\times D}$
using boolean arithmetic~(i.e., defining $1+1=1$):
$\mat{\mathcal{X}}\approx\bigvee_{d\in[D]}\mat{a}_d^{(1)}\boxtimes\dots\boxtimes\mat{a}_d^{(K)}$
where $\boxtimes$ and $\bigvee$ are logical AND and OR operations, respectively.
Similar to our proposed model,
boolean CP has binary parameters; however, 
its primary purpose is to reconstruct a given tensor accurately with few parameters
rather than tensor completion.
Actually, there have been few practical applications
of boolean tensor factorization.
While boolean CP would be also an interesting research topic for KGC,
its performance cannot be directly evaluated with ranking metrics,
which are the current de~facto standard for evaluating KGE models.




\section{Notation and Preliminaries}
\label{sec:notation}
For the most part, we follow previously established notation and terminology in the paper~\cite{tens}.
The notation and terminology we use for third-order tensors,
by which a knowledge graph is represented
(Section~\ref{sec:kgrep}) are summarized in the following.

Vectors are represented by boldface lowercase letters, e.g., $\mat{a}$.
Matrices are represented by boldface capital letters, e.g., $\mat{A}$.
Third-order tensors are represented by boldface calligraphic
letters, e.g., $\mat{\mathcal{X}}$.
For a natural number $n$,  $ [n] $ denotes the set of natural numbers $\{ 1, 2, \cdots, n \} $.


The $i$th row of a matrix $\mat{A}$ is represented by $\mat{a}_{i:}$,
and
the $j$th column of $\mat{A}$ is represented by 
$\mat{a}_{:j}$,
or simply as $\mat{a}_j$.
The $k$th frontal slice of a third-order tensor
is represented by $\mat{X}_k$.
The symbol $\circ$ represents the Hadamard product
for matrices and vectors,
and $\otimes$ represents the 
outer product.

A third-order tensor $\mat{\mathcal{X}} \in \mathbb{R}^{I_1 \times I_2 \times I_3}$
is rank-one if it can be written as the outer product of three vectors, i.e.,
$
\mat{\mathcal{X}}=\mat{a} \otimes \mat{b} \otimes \mat{c}
$.
This means that each element
$ x_{i_1 i_2 i_3} $
of $\mat{\mathcal{X}}$ is the product of the corresponding
vector elements:
\begin{displaymath}
x_{i_1 i_2 i_3}=a_{i_1}b_{i_2}
c_{i_3} \quad \text{for }i_1 \in [I_1]\text{, } i_2 \in [I_2]\text{, }
i_3 \in [I_3].
\end{displaymath}

The norm of a tensor $\mat{\mathcal{X}} \in \mathbb{R}^{I_1 \times
I_2 \times \cdots \times I_k}$
is the square root of the sum of the squares
of all its elements, i.e.,
\begin{displaymath}
\|\mat{\mathcal{X}}\|=\sqrt{\sum_{ i_1 \in [I_1] } \sum_{ i_2 \in [I_2] } \cdots \sum_{ i_k \in [I_k] }x_{i_1 i_2 \cdots i_k}^2}.
\end{displaymath}
For a matrix (or a second-order tensor), this norm
is Frobenius norm and is denoted $\|\cdot\|_{\text{F}}$.



\section{Tensor Decomposition for Knowledge Graphs}
\label{sec:tensor}
\subsection{Knowledge Graph Representation}
\label{sec:kgrep}
A knowledge graph $\mathcal{G}$ 
is a labeled multigraph 
$(\mathcal{E},\mathcal{R},\mathcal{F})$,
where $\mathcal{E} = \{e_1,\ldots,e_{N_e}\}$ is the set of entities (vertices),
$\mathcal{R} = \{r_1, \ldots ,r_{N_r}\}$
is the set of all relation types (edge labels), and
$\mathcal{F} \subset \mathcal{E} \times \mathcal{E} \times \mathcal{R}$
denotes the observed instances of relations over entities (edges).
The presence of an edge, or a triple, $(e_i,e_j,r_k) \in \mathcal{F}$ represents the fact 
that relation $r_k$ holds between subject entity $e_i$
and object entity $e_j$. 

A knowledge graph can be represented as a 
boolean third order tensor $\mat{\mathcal{X}}
\in \{0,1\}^{N_e \times N_e \times N_r }$
whose elements are given by
\begin{displaymath}
x_{ijk} = 
\begin{cases} 
   1 & \text{if } (e_i,e_j,r_k) \in \mathcal{F}, \\
   0 & \text{otherwise.}
\end{cases}
\end{displaymath}
KGC is concerned 
with incomplete knowledge graphs, i.e.,
$\mathcal{F} \subsetneq \mathcal{F}^*$,
where $\mathcal{F}^* \subset \mathcal{E} \times \mathcal{E} \times \mathcal{R}$
is the set of ground truth facts (and a superset of observed facts $\mathcal{F}$).
KGE has been recognized as a promising approach to predict
the truth value of unobserved triples 
in $( \mathcal{E} \times \mathcal{E} \times \mathcal{R} ) \setminus \mathcal{F}$.
KGE can be generally formulated as a tensor factorization problem
and defines a score function $\theta_{ijk}$ using
the latent vectors of entities and relations.


\subsection{CP Decomposition for KGC}
\begin{figure*}[t]
\centering
\begin{tikzpicture}[scale=0.83]



\pgfmathsetmacro{\cubex}{2}
\pgfmathsetmacro{\cubey}{2}
\pgfmathsetmacro{\cubez}{2}
\draw[black] (0,0,0) -- ++(-\cubex,0,0) -- ++(0,-\cubey,0) -- ++(\cubex,0,0) -- cycle;
\draw[black] (0,0,0) -- ++(0,0,-\cubez) -- ++(0,-\cubey,0) -- ++(0,0,\cubez) -- cycle;
\draw[black] (0,0,0) -- ++(-\cubex,0,0) -- ++(0,0,-\cubez) -- ++(\cubex,0,0) -- cycle;


\node (subject) at (-1,-2,0) [below] {object} ;
\node (object) at (-2.2,-1,0) [rotate=90] {subject} ;
\node (relation) at (-1.85,0.5,0) [rotate=45] {relation} ;

\pgfmathsetmacro{\x}{0}
\pgfmathsetmacro{\y}{-0.2}
\pgfmathsetmacro{\z}{0.3}
\pgfmathsetmacro{\cubex}{1.2 - \x}
\pgfmathsetmacro{\cubexx}{0.8}
\pgfmathsetmacro{\cubey}{1.2 - \y}
\pgfmathsetmacro{\cubez}{1.2 - \z}
\draw[black,dotted] (-1.2,-1.2+\y,-1.2+\z) -- ++(-\cubexx,0,0) ;
\draw[black,dotted] (-1.2+\x,0,-1.2+\z) -- ++(0,-\cubey,0) ;
\draw[black,dotted] (-1.2+\x,-1.2+\y,0) -- ++(0,0,-\cubez) ;
\pgfmathsetmacro{\cubex}{0.25}
\pgfmathsetmacro{\cubey}{0.25}
\pgfmathsetmacro{\cubez}{0.25}
\filldraw[draw=black,fill=kred] (-1.075+\x,-1.075+\y,-1.075+\z) -- ++(-\cubex,0,0) -- ++(0,-\cubey,0) -- ++(\cubex,0,0) -- cycle;
\filldraw[draw=black,fill=kred] (-1.075+\x,-1.075+\y,-1.075+\z) -- ++(0,0,-\cubez) -- ++(0,-\cubey,0) -- ++(0,0,\cubez) -- cycle;
\filldraw[draw=black,fill=kred] (-1.075+\x,-1.075+\y,-1.075+\z) -- ++(-\cubex,0,0) -- ++(0,0,-\cubez) -- ++(\cubex,0,0) -- cycle;

\node (relation) at (-1.4+\x,-1.5+\y,0) [above] {$k$} ;
\node (relation) at (-2,-1.2+\y,-1.2+\z) [above] {$i$} ;
\node (relation) at (-1.0+\x,-0.2,-1.2+\z) [above] {$j$} ;

\node (xijk) at (-0.775,-1.375,-1.075) [above] {$x_{ijk}$} ;

\node (equal) at (1.2,-0.9,0) [above,font=\LARGE] {$=$} ;

\pgfmathsetmacro{\cubex}{0.3}
\pgfmathsetmacro{\cubey}{0.3}
\pgfmathsetmacro{\cubez}{2}
\filldraw [draw=kblue, fill=kblue] (2,-0.85) -- ++(-\cubex,0) -- ++(0,-\cubey) -- ++(\cubex,0) -- cycle;
\pgfmathsetmacro{\cubey}{2}
\draw[black] (2,0) -- ++(-\cubex,0) -- ++(0,-\cubey) -- ++(\cubex,0) -- cycle;

\node (A) at (1.85,-2.5,0) [above] {$\mat{a}_1$} ;
\node (B) at (3.2,-0.3,0) [below] {$\mat{b}_1$} ;
\node (C) at (2.65,1.5,0) [below] {$\mat{c}_1$} ;
\node (i) at (2.25,-0.7,0) [below] {$i$} ;
\node (j) at (3.2,0.7,0) [below] {$j$} ;
\node (k) at (1.85,1.0,0) [below] {$k$} ;

\pgfmathsetmacro{\cubex}{0.3}
\pgfmathsetmacro{\cubey}{0.3}
\pgfmathsetmacro{\cubez}{2}
\filldraw[draw=kyellow, fill=kyellow] (3.35,0) -- ++(-\cubex,0) -- ++(0,-\cubey) -- ++(\cubex,0) -- cycle;

\pgfmathsetmacro{\cubex}{2}
\draw[black] (4.2,0) -- ++(-\cubex,0) -- ++(0,-\cubey) -- ++(\cubex,0) -- cycle;

\pgfmathsetmacro{\cubex}{0.3}
\pgfmathsetmacro{\cubey}{2}
\pgfmathsetmacro{\cubez}{0.3}
\filldraw[draw=kgreen, fill=kgreen] (2,0.2,-0.85) -- ++(-\cubex,0,0) -- ++(0,0,-\cubez) -- ++(\cubex,0,0) -- cycle;
\pgfmathsetmacro{\cubez}{2}
\draw[black] (2,0.2,0) -- ++(-\cubex,0,0) -- ++(0,0,-\cubez) -- ++(\cubex,0,0) -- cycle;

\pgfmathsetmacro{\span}{3.5}
\pgfmathsetmacro{\cubex}{0.3}
\pgfmathsetmacro{\cubey}{0.3}
\pgfmathsetmacro{\cubez}{2}
\filldraw [draw=kblue, fill=kblue] (\span+2,-0.85) -- ++(-\cubex,0) -- ++(0,-\cubey) -- ++(\cubex,0) -- cycle;
\pgfmathsetmacro{\cubey}{2}
\draw[black] (\span + 2,0) -- ++(-\cubex,0) -- ++(0,-\cubey) -- ++(\cubex,0) -- cycle;

\node (A) at (\span+1.85,-2.5,0) [above] {$\mat{a}_2$} ;
\node (B) at (\span+3.2,-0.3,0) [below] {$\mat{b}_2$} ;
\node (C) at (\span+2.65,1.5,0) [below] {$\mat{c}_2$} ;
\node (i) at (\span+2.25,-0.7,0) [below] {$i$} ;
\node (j) at (\span+3.2,0.7,0) [below] {$j$} ;
\node (k) at (\span+1.85,1.0,0) [below] {$k$} ;

\pgfmathsetmacro{\cubey}{0.3}
\pgfmathsetmacro{\cubez}{2}
\pgfmathsetmacro{\cubex}{0.3}
\filldraw[draw=kyellow, fill=kyellow] (\span+3.35,0) -- ++(-\cubex,0) -- ++(0,-\cubey) -- ++(\cubex,0) -- cycle;
\pgfmathsetmacro{\cubex}{2}
\draw[black] (\span+4.2,0) -- ++(-\cubex,0) -- ++(0,-\cubey) -- ++(\cubex,0) -- cycle;

\pgfmathsetmacro{\cubex}{0.3}
\pgfmathsetmacro{\cubey}{2}
\pgfmathsetmacro{\cubez}{0.3}
\filldraw[draw=kgreen, fill=kgreen] (\span+2,0.2,-0.85) -- ++(-\cubex,0,0) -- ++(0,0,-\cubez) -- ++(\cubex,0,0) -- cycle;
\pgfmathsetmacro{\cubez}{2}
\draw[black] (\span+2,0.2,0) -- ++(-\cubex,0,0) -- ++(0,0,-\cubez) -- ++(\cubex,0,0) -- cycle;

\node (plus) at (\span + 1.2,-0.9,0) [above,font=\LARGE] {$+$} ;

\pgfmathsetmacro{\span}{9}
\pgfmathsetmacro{\cubex}{0.3}
\pgfmathsetmacro{\cubez}{2}
\pgfmathsetmacro{\cubey}{0.3}
\filldraw [draw=kblue, fill=kblue] (\span+2,-0.85) -- ++(-\cubex,0) -- ++(0,-\cubey) -- ++(\cubex,0) -- cycle;
\pgfmathsetmacro{\cubey}{2}
\draw[black] (\span + 2,0) -- ++(-\cubex,0) -- ++(0,-\cubey) -- ++(\cubex,0) -- cycle;

\node (A) at (\span+1.85,-2.5,0) [above] {$\mat{a}_D$} ;
\node (B) at (\span+3.2,-0.3,0) [below] {$\mat{b}_D$} ;
\node (C) at (\span+2.65,1.5,0) [below] {$\mat{c}_D$} ;
\node (i) at (\span+2.25,-0.7,0) [below] {$i$} ;
\node (j) at (\span+3.2,0.7,0) [below] {$j$} ;
\node (k) at (\span+1.85,1.0,0) [below] {$k$} ;

\pgfmathsetmacro{\cubey}{0.3}
\pgfmathsetmacro{\cubez}{2}
\pgfmathsetmacro{\cubex}{0.3}
\filldraw[draw=kyellow, fill=kyellow] (\span+3.35,0) -- ++(-\cubex,0) -- ++(0,-\cubey) -- ++(\cubex,0) -- cycle;
\pgfmathsetmacro{\cubex}{2}
\draw[black] (\span+4.2,0) -- ++(-\cubex,0) -- ++(0,-\cubey) -- ++(\cubex,0) -- cycle;

\pgfmathsetmacro{\cubex}{0.3}
\pgfmathsetmacro{\cubey}{2}
\pgfmathsetmacro{\cubez}{0.3}
\filldraw[draw=kgreen, fill=kgreen] (\span+2,0.2,-0.85) -- ++(-\cubex,0,0) -- ++(0,0,-\cubez) -- ++(\cubex,0,0) -- cycle;
\pgfmathsetmacro{\cubez}{2}
\draw[black] (\span+2,0.2,0) -- ++(-\cubex,0,0) -- ++(0,0,-\cubez) -- ++(\cubex,0,0) -- cycle;

%
%
%
\node (plus) at (\span -1,-0.9,0) [above,font=\LARGE] {$+$} ;
\node (plus) at (\span +1.2,-0.9,0) [above,font=\LARGE] {$+$} ;
\node (dot) at (\span +0.1,-0.9,0) [above,font=\LARGE] {$\cdots$} ;

\end{tikzpicture}
\caption{$D$-component CP model for
a third-order tensor $\bm{\mathcal{X}}$.
The element $x_{ijk}$ of $\bm{\mathcal{X}}$ is given by
$\mat{a}_{i:}(\mat{b}_{j:}\circ\mat{c}_{k:})\transpose$.}
\label{fig:cp}
\end{figure*}
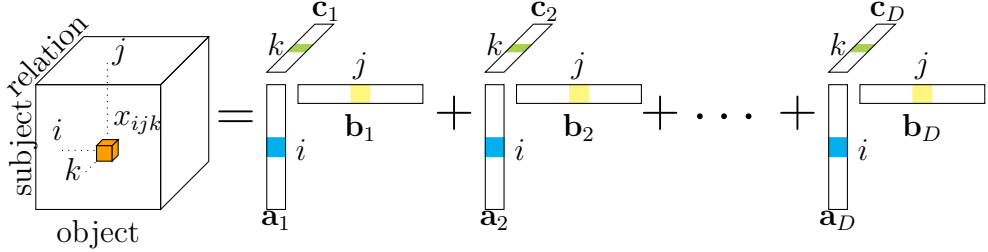
CP decomposition~\cite{cp} factorizes a given tensor
as a linear combination of $D$ rank-one tensors.
For a third-order
tensor $\mat{\mathcal{X}} \in \{0,1\}^{N_e \times N_e \times N_r}$,
its CP decomposition approximates the binary elements directly with real values as the
right-side of the following equation:
\begin{equation}
\mat{\mathcal{X}} \approx \sum_{ d \in [D] } \mat{a}_d \otimes \mat{b}_d \otimes \mat{c}_d \text{~,}
\label{eq:cp}
\end{equation}
where $\mat{a}_d \in \mathbb{R}^{N_e}$, 
$\mat{b}_d \in \mathbb{R}^{N_e}$ and $\mat{c}_d \in \mathbb{R}^{N_r}$.
Figure~\ref{fig:cp} illustrates CP for third-order tensors,
which demonstrates how we can formulate knowledge graphs.
The elements $x_{ijk}$ of $\mat{\mathcal{X}}$ can be written as 
\begin{eqnarray*}
  x_{ijk} \approx (\mat{a}_{i:}\circ \mat{b}_{j:})\mat{c}_{k:}\transpose=\sum_{ d \in [D] }  a_{id}b_{jd}c_{kd} 
  \qquad \text{for $i,j \in [N_e],\ k\in[N_r]$.}
\end{eqnarray*}
A \emph{factor matrix} refers to a matrix composed of vectors
from rank-one components.
We use 
$\mat{A}=[\mat{a}_1 \, \mat{a}_2 \, \cdots \, \mat{a}_D]$ to denote the 
factor matrix,
and denote $\mat{B}$ and $\mat{C}$ in a similar manner.
Note that
$\mat{a}_{i:}$, $\mat{b}_{j:}$ and $\mat{c}_{k:}$ represent the $D$-dimensional embedding vectors of subject $e_i$, object $e_j$,
and relation $r_k$, respectively.

A special case of CP where $\mat{A}=\mat{B}$ is known as DistMult.
DistMult can only model symmetric relations because it does not distinguish between
subject and object entities. However
such a simple model was recently shown to have state-of-the-art results for KGC~\cite{strike}.
Considering these results, we will also evaluate DistMult as a particular case of CP
in our experiments.

\subsection{Logistic Regression for CP Knowledge Graph Embeddings}
\label{sec:logistic}

Following the literature~\cite{logit}, we formulate a logistic regression model 
to solve the CP decomposition problem.
This model considers 
CP decomposition from a probabilistic perspective.
We consider $x_{ijk}$ a random variable and 
compute the maximum a posteriori~(MAP) estimates of
$\mat{A}$, $\mat{B}$, and $\mat{C}$ for the joint distribution as follows:
\begin{displaymath}
p(\mat{\mathcal{X}}|\mat{A},\mat{B},\mat{C})=\prod_{ i \in [N_e] }
\prod_{ j \in [N_e] }\prod_{ k \in [N_r] } p(x_{ijk}|\theta_{ijk}).
\end{displaymath}
We define the score function 
$\theta_{ijk} = \mat{a}_{i:}(\mat{b}_{j:}\circ\mat{c}_{k:})\transpose$.
This score function represents the 
CP decomposition model's confidence that a triple $(e_i,e_j,r_k)$ is a fact;
i.e., that it must be present in the knowledge graph. 
By assuming that $x_{ijk}$ follows a Bernoulli distribution,
$
x_{ijk} \sim \text{Bernoulli} (\sigma(\theta_{ijk}))
$,
the posterior probability is defined as follows:
\begin{displaymath}
p(x_{ijk}|\theta_{ijk}) = 
\begin{cases}
\sigma(\theta_{ijk}) & \quad \text{if } x_{ijk}=1 , \\
1 - \sigma(\theta_{ijk}) & \quad \text{if } x_{ijk}=0 ,
\end{cases}
\end{displaymath}
where $\sigma(x)=1/ ( 1 + \exp(-x) )$ is the sigmoid function.

Furthermore, we minimize the negative log-likelihood of the MAP estimates
such that the general form of the objective function to optimize is

\begin{displaymath}
  E = \sum_{ i \in [N_e] }\sum_{ j \in [N_e] }\sum_{ k \in [N_r] }E_{ijk},
\end{displaymath}
where
\begin{eqnarray*}
\begin{split}
	E_{ijk} &= 
	\underbrace{-x_{ijk}\log{\sigma(\theta_{ijk}}) + (x_{ijk}-1) \log(1- \sigma(\theta_{ijk}))}_{
\let\scriptstyle\textstyle
	\substack{\ell_{ijk}}} \\
	&\qquad \qquad \qquad + \underbrace{ \lambda_A\|\mat{a}_{i:}\|^2 
	+ \lambda_B\|\mat{b}_{j:}\|^2 
	+ \lambda_C\|\mat{c}_{k:}\|^2 }_{
\let\scriptstyle\textstyle
	\substack{\text{L2 regularizer}}}.
\end{split}
\end{eqnarray*}
Here, $\ell_{ijk}$ represents the logistic loss function for
a triple $(e_i,e_j,r_k)$.
While
most knowledge graphs contain only positive examples,
negative examples (false facts) are required to optimize the objective function.
However, if all unknown triples are treated as negative samples,
calculating the loss function requires
a prohibitive amount of time.
To approximately minimize the objective function,
following previous studies,
we used negative sampling in our experiments.

The objective function is minimized with an online learning method based on stochastic gradient descent~(SGD).
For each training example, SGD iteratively updates parameters by
$\mat{a}_{i:} \leftarrow \mat{a}_{i:}-\eta ({\partial E_{ijk}} / {\partial \mat{a}_{i:}}) $,
$\mat{b}_{j:} \leftarrow \mat{b}_{j:}-\eta ({\partial E_{ijk}} / {\partial \mat{b}_{j:}}) $,
and
$\mat{c}_{k:} \leftarrow \mat{c}_{k:}-\eta ({\partial E_{ijk}} / {\partial \mat{c}_{k:}}) $
with a learning rate $\eta$.
The partial gradient of the objective function with respect to
$\mat{a}_{i:}$ is
\begin{align*}
	\frac{\partial E_{ijk}}{\partial \mat{a}_{i:}}
& =
-x_{ijk}
\exp{\left(-\theta_{ijk} \right)}\sigma(\theta_{ijk})
\mat{b}_{j:} \circ \mat{c}_{k:} \\
& \qquad + \left(1-x_{ijk}\right)
\sigma(\theta_{ijk})
\mat{b}_{j:} \circ \mat{c}_{k:}
+2\lambda_A \mat{a}_{i:}.
\end{align*}
Those with respect to $\mat{b}_{j:}$ and
$\mat{c}_{k:}$ can be calculated in the same manner.


\section{Proposed Method}
\label{sec:proposed}
\subsection{Binarized Canonical Polyadic Decomposition}
We propose a B-CP decomposition algorithm
to make CP factor matrices
$\mat{A}$, $\mat{B}$, and $\mat{C}$
binary, i.e., the elements of these matrices are constrained to
only two possible values.

In this algorithm, we formulate the score function
$\theta_{i j k}\binary =\sum_{ d \in [D] } a_{i d}\binary b_{j d}\binary c_{k d}\binary $,
where $a_{i d}\binary =Q_{\Delta}(a_{i d}),\ b_{j d}\binary =Q_{\Delta}(b_{j d}),
\ c_{k d}\binary =Q_{\Delta}(c_{k d})$ are
obtained by binarizing
$a_{i d},\ b_{j d},\ c_{k d}$ through
the following quantization function:
\begin{displaymath}
  Q_{\Delta}(x)=\Delta \sign(x)=
  \begin{cases}
    +\Delta  & \quad \text{if } x \ge 0, \\
    -\Delta & \quad \text{if } x < 0,
  \end{cases}
\end{displaymath}
where
$\Delta$
is a positive constant value.
We extend the binarization function to
vectors in a natural way:
$Q_{\Delta}(\mat{x})$
is a vector whose $i$th element 
is $Q_{\Delta}(x_i)$.

Using the new score function,
we reformulate the loss function defined in Section \ref{sec:logistic}
as follows
\begin{displaymath}
	\ell_{ijk}\binary =
	-x_{ijk}\log{\sigma(\theta_{ijk}\binary )}
	+ (x_{ijk}-1) \log(1- \sigma(\theta_{ijk}\binary )).
\end{displaymath}
To train the binarized CP decomposition model,
we optimize the same objective function $E$
as in Section \ref{sec:logistic},
except we use the binarized loss function given above.
\if
Each element $x_{ijk}$ is approximated as:
 $x_{ijk} \approx H(\sigma(\theta_{ijk}))$,
where step function $H(x)$ is represented as follows:
\begin{equation*}
  H(x) =
\begin{cases}
	1&{\rm if~} x > 0.5,\\
	0&{\rm otherwise}.
\end{cases}
\end{equation*}
We extend the sigmoid and step function to matrices in a natural way:
$\mat{Y} = \sigma(\mat{X}), \mat{Z} = H(\mat{X})$, whose element
$y_{ij}$ and $z_{ij}$ are $\sigma(x_{ij})$ and $H(x_{ij})$, respectively.
 Then,
 the tensor $\mat{\mathcal{X}}$ is approximated as:
 \begin{displaymath}
\mat{\mathcal{X}}
\approx H\left(\sigma \left(\sum_{ d \in [D] } \mat{a}_d \otimes \mat{b}_d \otimes \mat{c}_d\right)\right) \text{.}
 \end{displaymath}
\fi
We also employ the SGD algorithm to minimize the objective function.
Note that the parameters cannot be updated properly
because the gradients of $Q_{\Delta}$ are zero
almost everywhere.
To address this issue, we simply
use an identity matrix $\mat{I}$ as the surrogate for the derivative
of $Q_{\Delta}$:
\begin{displaymath}
\frac{\partial Q_{\Delta}(\mat{x})}{\partial \mat{x}} \approx \mat{I}.
\end{displaymath}
This simple technique enables us to calculate the partial gradient of
the objective function with respect to
$\mat{a}_{i:}$ through the following chain rule:
\begin{displaymath}
	\frac{\partial \ell\binary _{ijk}}{\partial \mat{a}_{i:}}=
\frac{\partial Q_{\Delta}(\mat{a}_{i:})}{\partial \mat{a}_{i:}}
\frac{\partial \ell\binary _{ijk}}{\partial Q_{\Delta}(\mat{a}_{i:})} \approx
\mat{I}
\frac{\partial \ell\binary _{ijk}}{\partial Q_{\Delta}(\mat{a}_{i:})} =
\frac{\partial \ell\binary _{ijk}}{\partial \mat{a}_{i:}\binary }.
\end{displaymath}
This strategy is known as Hinton's straight-through estimator~\cite{hinton}, which 
has been developed in the 
deep neural networks community 
to quantize network components~\cite{bicon,surveyq}.
Using this technique,
we finally obtain the partial gradient as follows:
\begin{align*}
	\frac{\partial E_{ijk}}{\partial \mat{a}_{i:}}
& =
-x_{ijk}
\exp{\left(-\theta_{ijk}\binary  \right)}\sigma(\theta_{ijk}\binary )
\mat{b}_{j:}\binary  \circ \mat{c}_{k:}\binary  \\
& \qquad + \left(1-x_{ijk}\right)
\sigma(\theta_{ijk}\binary )
\mat{b}_{j:}\binary  \circ \mat{c}_{k:}\binary 
+2\lambda_A \mat{a}_{i:}.
\end{align*}
The partial gradients with respect to $\mat{b}_{j:}$ and $\mat{c}_{k:}$
can be computed in a similar manner.



\subsection{Faster Score Computation with Bitwise Operations}
\label{sec:bitwise-computation}

Binary vector representations
result in
faster computation of scores $\theta_{ijk}\binary $,
because
the inner product between binary vectors can
be implemented by bitwise operations:
To compute $\theta_{ijk}\binary $,
we can use an XNOR operation and the Hamming distance function:
\begin{equation}
  \theta_{ijk}\binary
  =
  (\mat{a}_{i:}\binary \circ\mat{b}_{j:}\binary ) \mat{c}_{k:}\transposeb
  =
  \Delta^{3} \{ D-2 BitC \}
  \label{eq:b-cp-quick-computation}
\end{equation}
where $BitC=h(\overline{\mat{a}}_{i:}\binary , \text{XNOR}(\overline{\mat{b}}_{j:}\binary ,\overline{\mat{c}}_{k:}\binary ))$.
Here $\overline{\mat{x}}\binary $ denotes the boolean vector
whose $i$th element $\overline{x}_i\binary $ is set to one if $x_i\binary =\Delta$;
otherwise zero.
$\text{XNOR}$ represents
the logical complement of the exclusive OR operation,
and $h(\cdot,\cdot)$ denotes the Hamming distance function.
Note that,
as shown in Table~\ref{tab:model_exp},
when we are interested in the ranking of triples by 
Eq.~\eqref{eq:b-cp-quick-computation},
computing $BitC$ for each triple is sufficient as $D$ and $\Delta$
are constant over all triples.


\subsection{Full Expressiveness of B-CP}
\label{sec:full-expressiveness}

It is known that the CP model \eqref{eq:cp} 
is \emph{fully expressive}~\cite{simple}, i.e.,
given any knowledge graph,
there exists an assignment of values to
the embeddings of the entities and relations that accurately reconstruct it.
To be precise, there exists a natural number $D$ and a set of matrices $\mat{A} , \mat{B} \in \Rset^{N_e \times D}$ and $\mat{C} \in \Rset^{N_r \times D}$ such that
strict equality holds in Eq.~\eqref{eq:cp}, i.e.,
\begin{displaymath}
  \mat{\mathcal{X}} = \sum_{ d \in [D] } \mat{a}_d \otimes \mat{b}_d \otimes \mat{c}_d .
\end{displaymath}



It can be shown that B-CP is also fully expressive in the following sense.


\begin{thm}
  \label{thm:expressiveness}
  For an arbitrary boolean tensor
  $\mat{\mathcal{X}} \in \{0, 1\}^{N_e \times N_e \times N_r}$,
  there exists a B-CP decomposition with binary factor matrices
  $ \mat{A}\binary  , \mat{B}\binary  \in \{+\Delta, -\Delta\}^{N_e \times D} $  and $\mat{C}\binary \in \{+\Delta, -\Delta\}^{N_r \times D} $
  for some $D$ and $\Delta$,
  such that
  \begin{equation}
    \mat{\mathcal{X}} = \sum_{ d \in [D] } \mat{a}\binary _d \otimes \mat{b}\binary _d \otimes \mat{c}\binary _d .
    \label{eq:bcp-model}
  \end{equation}

\end{thm}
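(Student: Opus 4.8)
\emph{Proof idea.} The plan is to binarize the standard one-hot construction that establishes full expressiveness of real-valued CP. First I would record the exact decomposition
\[
  \mat{\mathcal{X}} = \sum_{(i,j,k)\,:\,x_{ijk}=1} \mat{e}_i \otimes \mat{e}_j \otimes \mat{e}_k,
\]
where $\mat{e}_i,\mat{e}_j,\mat{e}_k$ are one-hot (standard basis) vectors of dimension $N_e$, $N_e$, and $N_r$, respectively: each summand is the indicator of a single true triple, so the sum reproduces $\mat{\mathcal{X}}$ entrywise. This already gives real-valued CP with $D$ equal to the number of $1$-entries.

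The difficulty is that B-CP forces every factor entry into $\{+\Delta,-\Delta\}$, so \emph{no} single rank-one binary tensor can serve as such an indicator: every entry of an outer product of three $\pm\Delta$ vectors equals $\pm\Delta^{3}$, and in particular is never $0$. The way around this is to note that a one-hot vector is a scaled difference of two sign vectors. Writing $\mat{a}^{+}$ for the all-$(+\Delta)$ vector and $\mat{a}^{-}$ for the vector that is $+\Delta$ everywhere except $-\Delta$ in coordinate $i_0$, we have $\tfrac{1}{2\Delta}(\mat{a}^{+}-\mat{a}^{-}) = \mat{e}_{i_0}$, and analogously for the object and relation modes.

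Substituting these into a single indicator and expanding the outer product trilinearly,
\[
  \mat{e}_{i_0}\otimes\mat{e}_{j_0}\otimes\mat{e}_{k_0}
  = \frac{1}{(2\Delta)^{3}}\,(\mat{a}^{+}-\mat{a}^{-})\otimes(\mat{b}^{+}-\mat{b}^{-})\otimes(\mat{c}^{+}-\mat{c}^{-}),
\]
which is a sum of $8$ terms of the form $\pm\,\mat{a}^{\pm}\otimes\mat{b}^{\pm}\otimes\mat{c}^{\pm}$. The key step is that each $\pm1$ coefficient can be absorbed into one of its sign-vector factors, because $-\mat{a}^{\pm}$ is again a $\{+\Delta,-\Delta\}$ vector; hence every one of the $8$ terms is a legitimate B-CP rank-one component. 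Taking $\Delta=\tfrac12$ makes $(2\Delta)^{3}=1$, so the prefactor vanishes and the $8$ binary components sum exactly to the indicator tensor. Summing over all true triples then yields Eq.~\eqref{eq:bcp-model} with $D = 8m \le 8 N_e^{2} N_r$, where $m$ is the number of $1$-entries of $\mat{\mathcal{X}}$.

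I expect the main obstacle to be the sign-absorption and cancellation bookkeeping rather than any deep idea: one must check that at the target position all eight contributions reinforce to $8\Delta^{3}=1$, while at every other position the $\pm$ contributions cancel to $0$ (the cancellation in each mode follows because the corresponding coordinate of $\mat{a}^{+}-\mat{a}^{-}$ vanishes off the target). This mode-wise cancellation is exactly what supplies the ``zeros'' that a single real-valued component could provide for free, and verifying it is the crux; the remaining observation, that distinct indicator tensors simply add without interference, is routine.
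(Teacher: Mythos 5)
Your proof is correct, but it takes a genuinely different route from the paper's. You binarize the classical rank-one-per-fact construction: each true triple contributes the indicator $\mat{e}_i \otimes \mat{e}_j \otimes \mat{e}_k$, which you rewrite as $(2\Delta)^{-3}(\mat{a}^{+}-\mat{a}^{-})\otimes(\mat{b}^{+}-\mat{b}^{-})\otimes(\mat{c}^{+}-\mat{c}^{-})$ and expand multilinearly into eight sign-vector rank-one terms, absorbing each $\pm 1$ coefficient into one factor; with $\Delta = 1/2$ this yields $D = 8m$, where $m$ is the number of ones in $\mat{\mathcal{X}}$, and the mode-wise cancellation you flag as the crux does go through, since the off-target coordinates of each difference vector vanish. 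The paper instead builds a single decomposition of fixed width $D = 8N_eN_r$: all three factor matrices are block matrices over the four-dimensional patterns $\mat{p}$, $\mat{q}$, $\mat{r}$, $-\mat{r}$ arranged in ``pages'' and ``halfpages,'' only $\mat{B}\binary$ encodes the data, and the identities $(\mat{q}\circ\mat{p})\mat{r}\transpose = (\mat{q}\circ\mat{r})\mat{r}\transpose = 0$ together with a halfpage sign flip in $\mat{C}\binary$ force every block except the two addressed by $(i,k)$ to cancel. The trade-off: your bound is data-dependent and can be far smaller for sparse knowledge graphs (the realistic regime, where $m \ll N_eN_r$), but it degrades to $8N_e^2N_r$ for dense tensors, a factor of $N_e$ worse than the paper's uniform bound; the paper's bound is independent of the content of $\mat{\mathcal{X}}$ and isolates all data-dependence in a single factor matrix, at the cost of heavier combinatorial bookkeeping. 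One minor edge case: when $\mat{\mathcal{X}}$ is the all-zero tensor your construction gives $D=0$; if a positive $D$ is required, pad with a cancelling pair such as $\mat{a}^{+}\otimes\mat{b}^{+}\otimes\mat{c}^{+} + (-\mat{a}^{+})\otimes\mat{b}^{+}\otimes\mat{c}^{+}$.
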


\begin{proof}
  See \ref{sec:proof}.
\end{proof}



\section{Experiments}
\label{sec:exp}
\subsection{Experiments on Benchmark Datasets}
We evaluated the performance of the proposed approach in a standard
KGC task.
\subsubsection{Datasets and Evaluation Protocol}
\begin{table}[t]
  \centering
  \mytablefont
  \caption{Benchmark datasets for KGC.}
  \label{tab:dataset}
  \scalebox{1.5}{
  \begin{tabular}{lrrrr}
  \toprule
                             & WN18 & FB15k & WN18RR & FB15k-237 \\\cmidrule(lr){1-5}
             $N_e$ & 40,943 & 14,951 & 40,559 & 14,505 \\
             $N_r$ & 18 & 1,345 & 11 & 237 \\
             \# training triples & 141,442 & 483,142 & 86,835 & 272,115 \\
             \# validation triples & 5,000 & 50,000 & 3,034 & 17,535 \\
             \# test triples  & 5,000 & 59,071 & 3,134 & 20,466 \\
  \bottomrule
  \end{tabular} 
}
\end{table}



We used four standard datasets,
WN18, FB15k~\cite{transe}, WN18RR, and FB15k-237~\cite{conve}.
Table~\ref{tab:dataset} shows the data statistics\footnote{
Following \cite{simple,cano},
for each triple $(e_i,e_j,r_k)$ observed in the training dataset,
we added its inverse triple $(e_j,e_i,r_k^{-1})$
also in the training set.}.

We followed the standard evaluation procedure
to evaluate the KGC performance:
Given a test triple $(e_i,e_j,r_k)$, we
corrupted it by replacing $e_i$ or $e_j$
with every entity $e_\ell$ in $\mathcal{E}$
and calculated the score $\theta_{i,\ell,k}$ or $\theta_{\ell,j,k}$.
We then ranked all these triples and the original non-corrupted triple by
their scores in descending order.
To measure the quality of the ranking,
we used the mean reciprocal rank (MRR) and Hits at $N$~(Hits@$N$).
We here report only results in the filtered setting~\cite{transe},
which provides a more reliable performance metric in the presence of multiple correct triples.

\subsubsection{Experimental Setup}
\label{sec:expcond}
To train DistMult/CP models, we selected the hyperparameters via a
grid search such that the filtered MRR is maximized on the validation set.
For the standard CP model,
the grid search was performed over
all combinations of $\lambda_A,\lambda_B,\lambda_C\in \{ 0, 0.0001 \}$,
learning rate $\eta\in\{0.025, 0.05\}$,
and embedding dimension $D\in\{\allowbreak 200, \allowbreak 400\}$.
For our binarized CP (B-DistMult/B-CP) models,
all combinations of $\lambda_A,\lambda_B,\lambda_C\in \{ 0, 0.0001 \}$,
$\eta\in\{0.025, 0.05\}$, $\Delta\in\{0.3, 0.5\}$
and $D\in\{200,400,800\}$ were tried.
The initial values of the representation vector components
were randomly sampled from the uniform distribution
$U[- {\sqrt{6}} / {\sqrt{2D}}, {+ \sqrt{6}} / {\sqrt{2D}}]$~\cite{init}.
The maximum number of training epochs was set to 1,000.
For SGD training, negative samples were generated on the basis of
the local closed-world assumption~\cite{survey}.
The number of negative samples
generated per positive sample was five for WN18/WN18RR
and ten for FB15k/FB15k-237.

We implemented our CP decomposition 
systems in C++ and conducted all experiments on a 64-bit
16-Core AMD Ryzen Threadripper 1950x with 3.4 GHz CPUs.
The program code was compiled using GCC 7.3 with the -O3 option.

\subsubsection{Main Results}
      \begin{table*}[tb]
  \mytablefont
  \caption{KGC results on WN18 and FB15k: Filtered MRR and Hits@$\{1,3,10\}$ (\%).
      *, ** and *** indicate results transcribed from~\cite{complex},~\cite{conve} and~\cite{simple}, respectively.}
  \label{tab:results_old}
  \centering
  \scalebox{1.3}{
    \begin{tabular}{lcccccccc}
      \toprule
      & \multicolumn{4}{c}{WN18}   & \multicolumn{4}{c}{FB15k} \\
      \cmidrule(lr){2-5}\cmidrule(lr){6-9}
      & \multirow{2}{*}{MRR} & \multicolumn{3}{c}{Hits@} & \multirow{2}{*}{MRR} & \multicolumn{3}{c}{Hits@} \\
      \cmidrule(lr){3-5}\cmidrule(lr){7-9}
      Models & & 1 & 3 & 10 & & 1 & 3 & 10  \\\cmidrule(lr){1-9}
      TransE*  & 45.4 & 8.9  & 82.3 & 93.4      & 38.0 & 23.1 & 47.2 & 64.1  \\
      DistMult*& 82.2 & 72.8 & 91.4 & 93.6      & 65.4 & 54.6 & 73.3 & 82.4  \\
      HolE*    & 93.8 & 93.0 & 94.5 & 94.9      & 52.4 & 40.2 & 61.3 & 73.9  \\
      ComplEx* & 94.1 & 93.6 & 94.5 & 94.7      & 69.2 & 59.9 & 75.9 & 84.0  \\
      ANALOGY**& 94.2 & 93.9 & 94.4 & 94.7       & 72.5 & 64.6 & 78.5 & 85.4  \\
      CP***    & 94.2 & 93.9 & 94.4 & 94.7       & 72.7 & {\bf 66.0} & 77.3 & 83.9 \\
      ConvE**  & 94.3 & 93.5 & 94.6 & {\bf 95.6} & 65.7 & 55.8 & 72.3 & 83.1  \\\cmidrule(lr){1-9}
      DistMult &  82.4 & 73.1 & 91.8 & 94.0 & 65.3 & 54.2 & 73.0 & 82.1 \\
      CP & 94.2 & 93.9 & 94.5 & 94.7 & 72.0 & 65.9 & 76.8 & 82.9 \\
      {\bf B-DistMult} & 84.1 & 76.1 & 91.5 & 94.4 & 67.2 & 55.8 & 76.0 & 85.4 \\
      {\bf B-CP} ($D=200$) & 90.1 & 88.1 & 91.8 & 93.3 & 69.5 & 61.1 & 76.0 & 83.5 \\
      {\bf B-CP} & {\bf 94.5} & {\bf 94.1} & {\bf 94.8} & {\bf 95.6} & {\bf 73.3} & {\bf 66.0} & {\bf 79.3} & {\bf 87.0} \\\bottomrule
    \end{tabular}
  }
\end{table*}



\begin{table*}[tb]
  \mytablefont
  \caption{KGC results on WN18RR and FB15k-237: Filtered MRR and Hits@$\{1,3,10\}$ (\%).
      * indicates results transcribed from~\cite{conve}.}
  \label{tab:results_new}
  \centering
  \scalebox{1.3}{
    \begin{tabular}{lcccccccc}
      \toprule
      & \multicolumn{4}{c}{WN18RR}   & \multicolumn{4}{c}{FB15k-237} \\
      \cmidrule(lr){2-5}\cmidrule(lr){6-9}
      & \multirow{2}{*}{MRR} & \multicolumn{3}{c}{Hits@} & \multirow{2}{*}{MRR} & \multicolumn{3}{c}{Hits@} \\
      \cmidrule(lr){3-5}\cmidrule(lr){7-9}
      Models    & & 1    & 3    & 10   &      & 1    & 3    & 10  \\\cmidrule(lr){1-9}
      DistMult* & 43.0 & 39.0 & 44.0 & 49.0 & 24.1 & 15.5 & 26.3 & 41.9 \\
      ComplEx*  & 44.0 & 41.0 & 46.0 & 51.0 &24.7 & 15.8 & 27.5 & 42.8 \\
      R-GCN*    & --   & --   & --   & --   & 24.8 &15.3 & 25.8 & 41.7 \\
      ConvE*    & 43.0 & 40.0 & 44.0 & 52.0 & {\bf 32.5} & {\bf 23.7} & {\bf 35.6} & {\bf 50.1} \\\cmidrule(lr){1-9}
      DistMult & 43.0 & 40.0 & 44.0 & 49.0 & 24.0 & 15.3 & 26.0 & 41.8 \\
      CP & 44.0 & 42.0 & 46.0 & 51.0 & 29.0 & 19.8 & 32.2 & 47.9 \\
      {\bf B-DistMult} & 43.0 & 40.0 & 44.0 & 49.0 & 24.3 & 15.6 & 26.7 & 42.1 \\
      {\bf B-CP} ($D=200$) & 45.0 & 43.0 & 46.0 & 50.0 & 27.8 & 19.4 & 30.4 & 44.6 \\
      {\bf B-CP} & {\bf 46.0} & {\bf 44.0} & {\bf 47.0} & {\bf 52.0} & 29.5 & 21.0 & 32.4 & 48.3 \\\bottomrule
    \end{tabular}
  }
\end{table*}



\begin{figure}[t]
\centering
\begin{tabular}{c}
\begin{minipage}{0.5\hsize}
\centering
\includegraphics[width=0.95\linewidth]{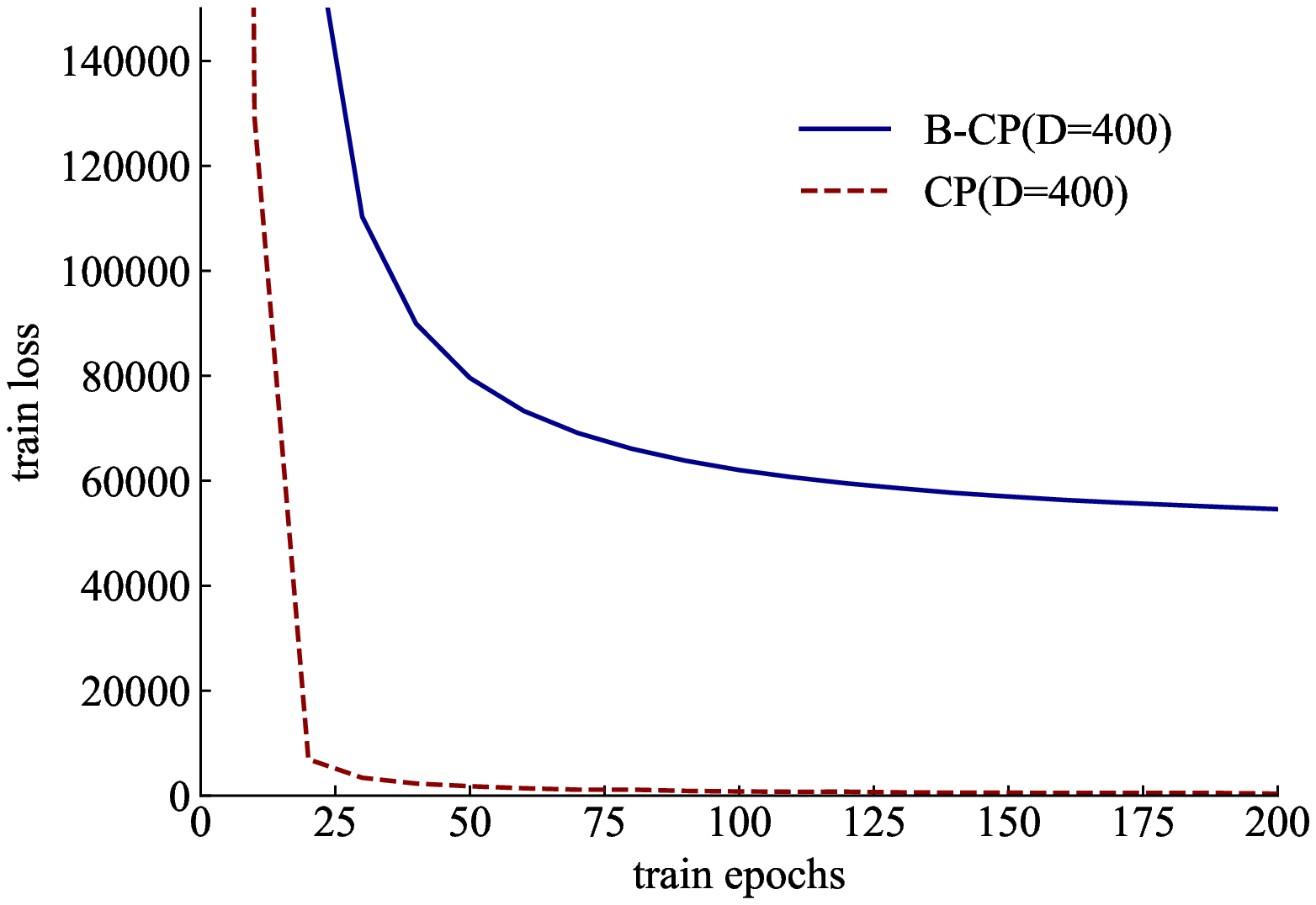}
\end{minipage}
\begin{minipage}{0.5\hsize}
\centering
\includegraphics[width=0.95\linewidth]{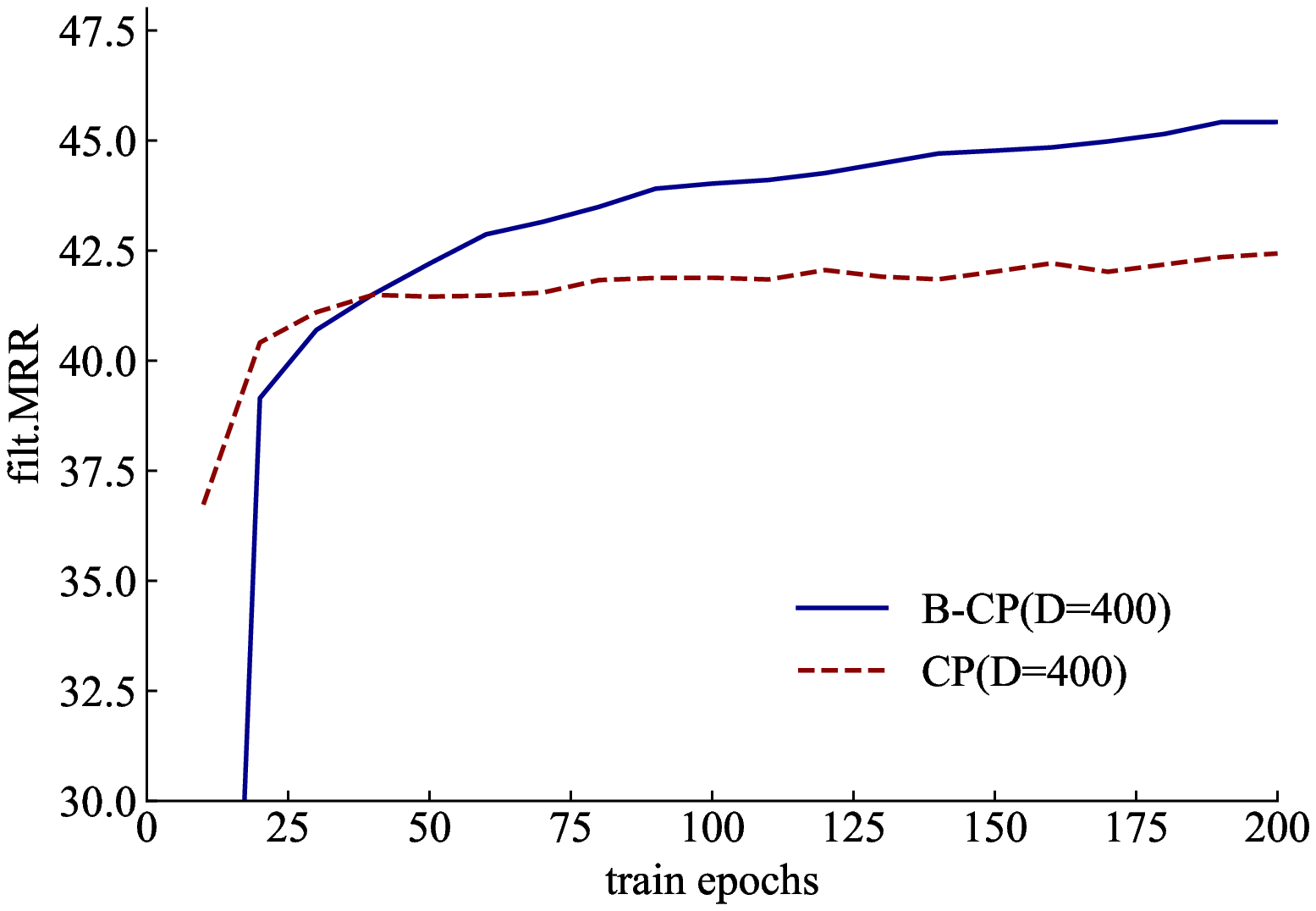}
\end{minipage}
\end{tabular}
\caption{
Training loss and filtered MRR vs. epochs
trained on WN18RR.}
\label{fig:loss-acc}
\end{figure}



We compared standard DistMult/CP and 
B-DistMult/B-CP models
with other state-of-the-art KGE models.
The best vector dimensions were 200 in DistMult/CP for all datasets,
while the best resulting vector dimensions of B-DistMult/B-CP for WN18/WN18RR/FB15k-237
were 400 and those for FB15k were 800.
Table~\ref{tab:results_old} shows
the results on WN18 and FB15k,
and Table~\ref{tab:results_new} shows the results
on WN18RR and FB15k-237.
For most evaluation metrics,
the proposed B-CP model outperformed or was competitive
with the best baseline. However,
with a small vector dimension ($D=200$),
B-CP's performance tended to degrade.

Figure~\ref{fig:loss-acc} shows
training loss and accuracy versus training epochs
for CP~($D=400$) and B-CP~($D=400$) on WN18RR.
The results indicate that CP is prone to overfitting as the 
training epochs increase. 
In contrast, B-CP appears less susceptible to
overfitting than CP.

\subsubsection{KGC Performance vs. Model Size}
\begin{table*}[tb]
  \mytablefont
  \caption{Results on WN18RR and FB15k-237 with varying embedding dimensions. Model size denotes the number of bits required to store the model in memory.}
  \label{tab:results_comp}
  \centering
  \scalebox{1.2}{
    \begin{tabular}{lrcc}
       \toprule
       \multirow{2}{*}{Model} & \multirow{2}{*}{Model Size} & \multicolumn{2}{c}{MRR} \\
       & & WN18RR & FB15k-237 \\\cmidrule(lr){1-4}
       CP ($D=15$)   & $480(2N_e+N_r)$ & 40.0 & 22.0 \\
       CP ($D=50$)   & $1,600(2N_e+N_r)$ & 43.0 & 24.8 \\
       CP ($D=200$)  & $6,400(2N_e+N_r)$ & 44.0 & 29.0 \\
       VQ-CP ($D=200$) & $200(2N_e+N_r)$ & 36.0 & 8.7 \\
       VQ-CP ($D=400$)  & $400(2N_e+N_r)$ & 36.0 & 8.3 \\\cmidrule(lr){1-4}
       {\bf B-CP} ($D=100$) & $100(2N_e+N_r)$ & 38.0 & 23.2 \\
       {\bf B-CP} ($D=200$) & $200(2N_e+N_r)$ & 45.0 & 27.8 \\
       {\bf B-CP} ($D=300$) & $300(2N_e+N_r)$ & 45.0 & 29.0 \\
       {\bf B-CP} ($D=400$) & $400(2N_e+N_r)$ & {\bf 46.0} & 29.2 \\
       {\bf B-CP} ($D=800$) & $800(2N_e+N_r)$ & {\bf 46.0} & {\bf 29.5} \\\bottomrule
    \end{tabular}
  }
\end{table*}



We also investigated the extent to which the proposed B-CP method
can reduce model size and maintain KGC performance.
For a fair evaluation, we also examined a naive vector quantization
method (VQ)~\cite{xnor} that can reduce the model size.
Given a real valued matrix ${\mat X}\in\Rset^{D_1\times D_2}$,
the VQ method solves the following optimization problem:
\begin{align*}
{\hat {\mat X}\binary },{\hat \alpha}=\argmin_{{\mat X}\binary ,\alpha}
{\|{\mat X}-\alpha{\mat X}\binary \|_F^2}
\end{align*}
where ${\mat X}\binary \in\{+1,-1\}^{D_1\times D_2}$ is a binary matrix
and $\alpha$ is a positive real value.
The optimal solutions ${\hat {\mat X}\binary }$ and ${\hat \alpha}$
are given by $Q_1({\mat X})$ and $\frac{1}{D_1\times D_2} \| {\mat X} \|_{1}$,
respectively,
where $\|\cdot \|_1$ denotes $l_1$-norm,
and
$Q_1({\mat X})$ is a sign function whose behavior in each element $x$
of ${\mat X}$ is as per the sign function $Q_1(x)$.
After obtaining factor matrices ${\mat A}$, ${\mat B},$ and ${\mat C}$
via CP decomposition,
we solved the above optimization problem independently for each matrix.
We refer to this method as VQ-CP.

Table~\ref{tab:results_comp} shows the results
when the dimension size of the embeddings was varied.
While CP requires $64\times D$ and $32\times D$ bits per
entity and relation, respectively,
both B-CP and VQ-CP only require one thirty-second of these bit values.
Obviously, the task performance dropped significantly
after vector quantization~(VQ-CP).
The CP performance also degraded when 
the vector dimension was reduced from 200 to 15 or 50.
While simply reducing the number of dimensions degraded
accuracy,
B-CP successfully reduced the
model size nearly 10- to 20-fold compared to CP and other KGE models
without performance degradation.

\subsubsection{Computation Time}
\begin{figure}[t]
\centering
\includegraphics[width=0.8\linewidth]{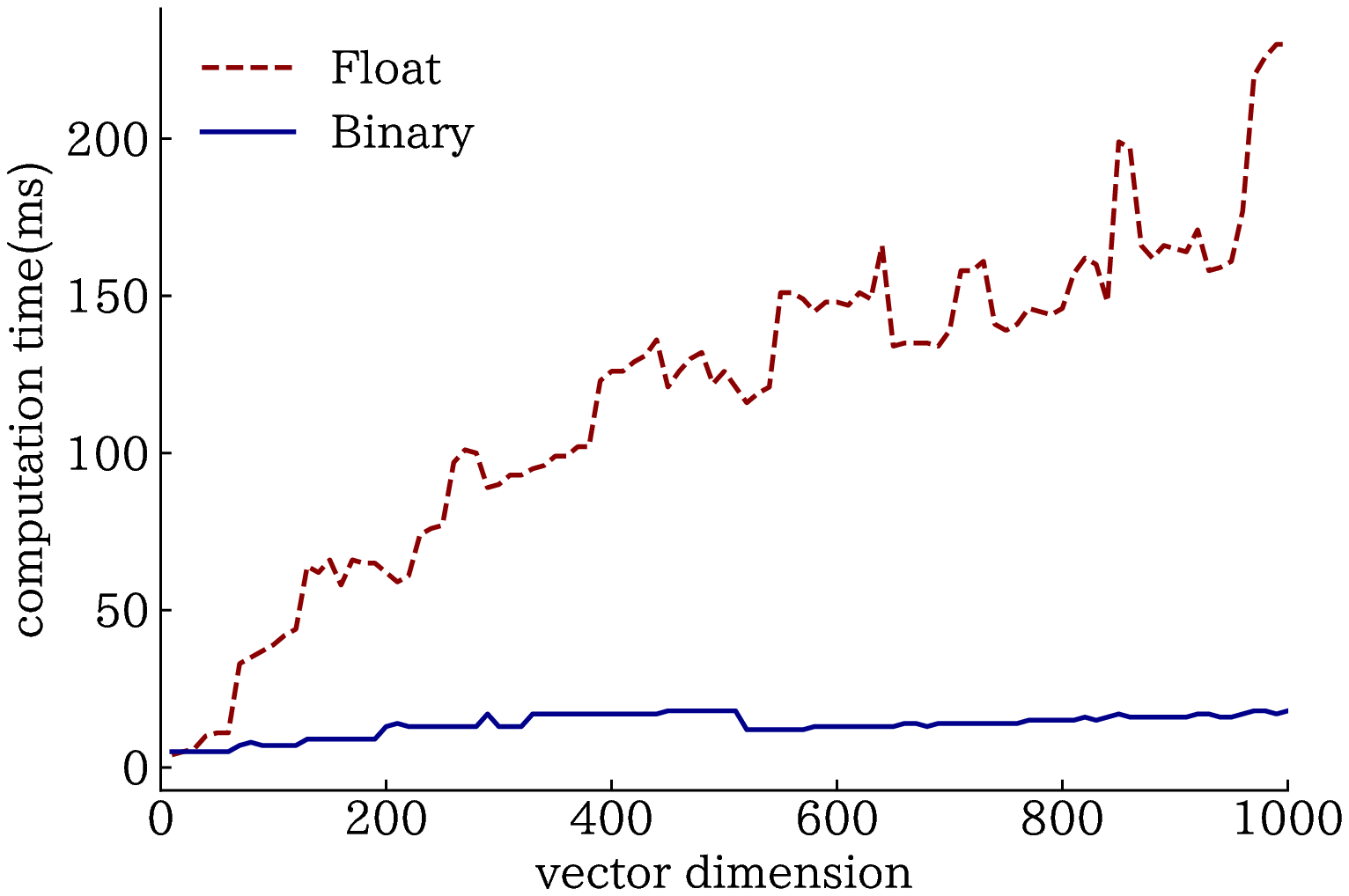}
\caption{CPU run time per 100,000-times score computations with single CPU thread.}
\label{fig:time}
\end{figure}



As described in Section~\ref{sec:proposed},
the B-CP model can
accelerate the computation of confidence scores by using
the XNOR operation and Hamming distance function. 
To compare the score computation speed between CP~(Float) and
B-CP~(Binary),
we calculated the confidence scores 100,000 times for both CP and B-CP while
varying the vector size $D$ from 10 to 1,000 at increments of ten.
Figure~\ref{fig:time} clearly shows that
bitwise operations increase computation speed significantly compared to
standard multiply-accumulate operations.

\subsection{Evaluation on Large-scale Knowledge Graphs}
\begin{table}[t]
  \centering
  \mytablefont
  \caption{Freebase-music and YAGO statistics.}
  \label{tab:ldataset}
  \scalebox{1.3}{
  \begin{tabular}{lrr}
  \toprule
                             & Freebase-music & YAGO \\\cmidrule(lr){1-3}
             $N_e$ & 3,004,505 & 3,983,941\\
             $N_r$ & 131 & 75 \\
             \# training triples & 14,786,254 & 9,944,560\\
             \# validation triples & 3,696,564& 2,486,140\\
             \# test triples  & 3,696,564 & 2,486,140\\
  \bottomrule
  \end{tabular} 
}
\end{table}



\begin{figure}[t]
\centering
\begin{tabular}{c}
\begin{minipage}{0.5\hsize}
\centering
\includegraphics[width=0.85\linewidth]{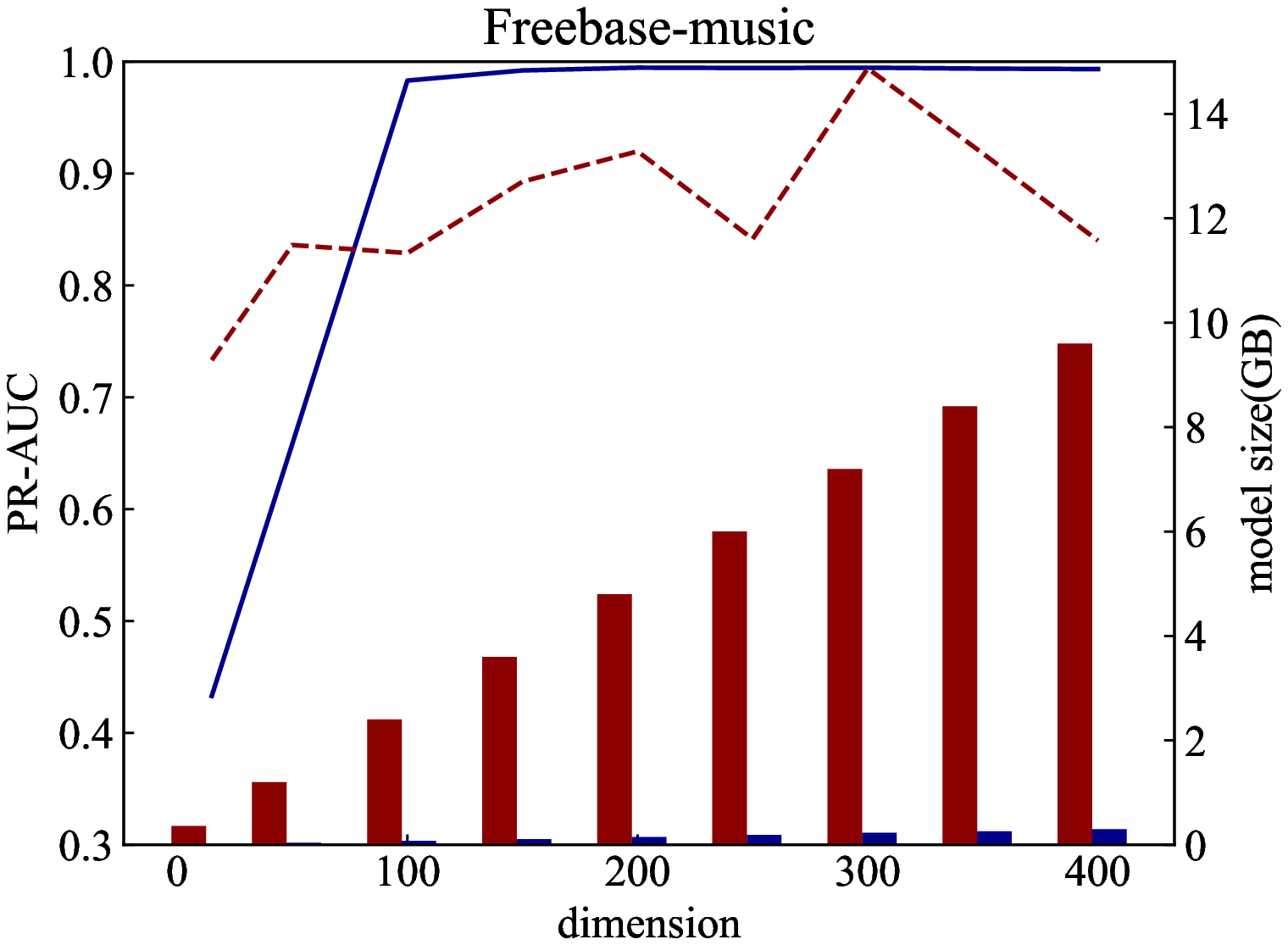}
\end{minipage}
\begin{minipage}{0.5\hsize}
\centering
\includegraphics[width=1.03\linewidth]{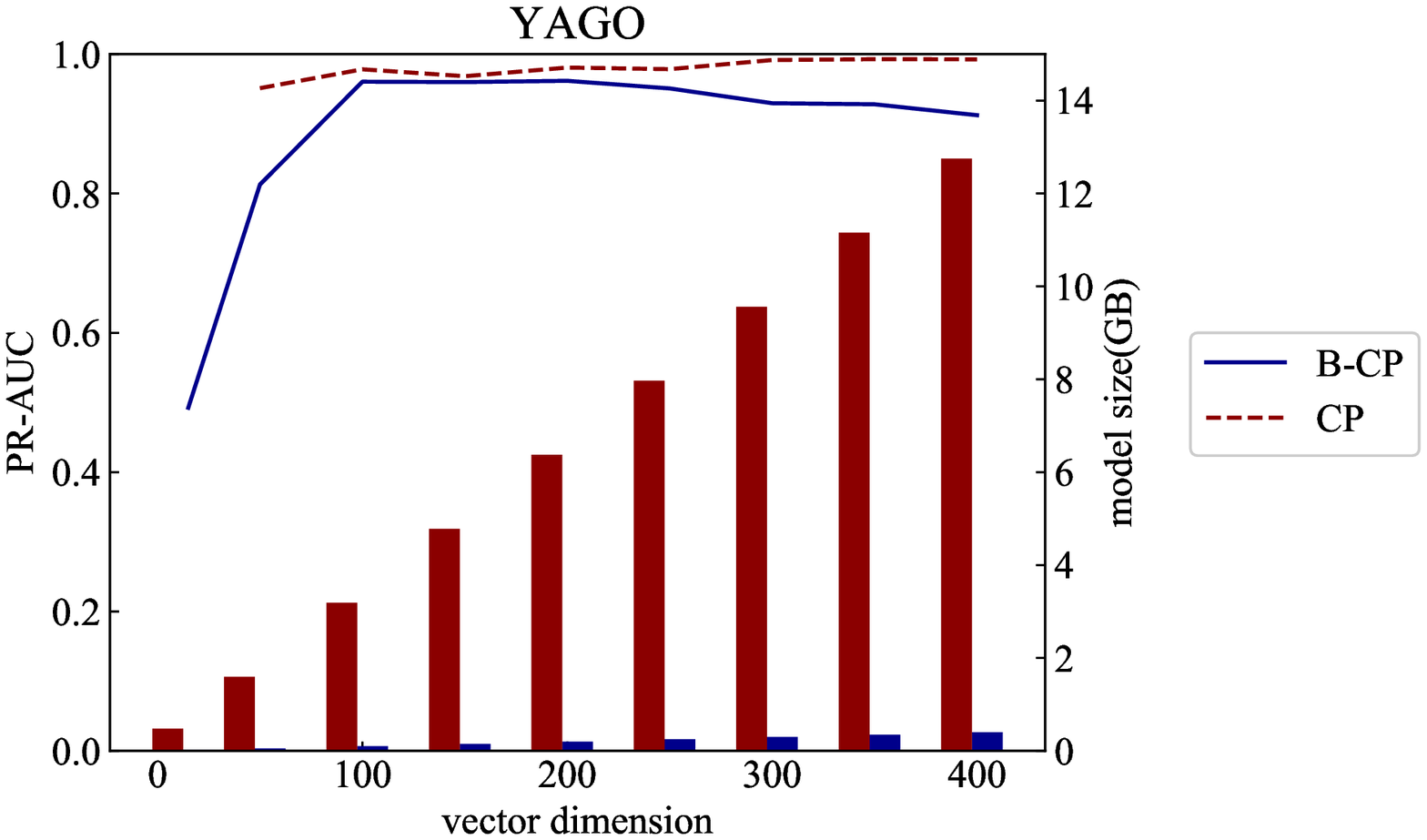}
\end{minipage}
\end{tabular}

\caption{
Results of CP and B-CP models on Freebase-music and YAGO datasets.
The line and bar graphs represent PR-AUC and model size, respectively.
CP model results~($D=15$) on YAGO were excluded
because minimization of the loss function did not converge at all.
}
\label{fig:prauc}
\end{figure}




To verify the effectiveness of B-CP over larger datasets,
we also conducted experiments on the Freebase-music~\cite{freebase_music}
and YAGO~\cite{yago}\footnote{Version3.1 from \url{http://yago-knowledge.org}} datasets.
To reduce noises in Freebase-music, 
we removed the triples whose relations and entities occur less than ten times,
and in YAGO we used only fact triples and excluded other
extra information, such as taxonomies and types of entities.
In both datasets, 
we split the triples into 80/10/10 fractions for training, validation, and testing.
Furthermore, we randomly generated the same number of triples as test~(validation) triples
that were not in the knowledge graph, and added generated triples to the test~(validation) triples 
as negative samples.
Table \ref{tab:ldataset} shows the data statistics.

We tried all combinations of $\lambda_A,\lambda_B,\lambda_C\in\{0.0001, 0\}$,
learning rate $\eta\in\{0.005, 0.0075, 0.01, 0.025, 0.05\}$,
and embedding dimension $D\in\{15, 50, \linebreak 100, 150, 200, 250,300, 350, 400\}$
during the grid search.
We evaluated the results using the area under the 
precision-recall curve~(PR-AUC).

The results are shown in 
Figure \ref{fig:prauc}.
As expected, on each dataset, B-CP successfully reduced the model size while achieving performance equal to or better
than CP.
These results show that B-CP is robust to data size.

\subsection{Link-Based Entity Clustering}
\begin{figure}[t]
\centering
\includegraphics[width=\linewidth]{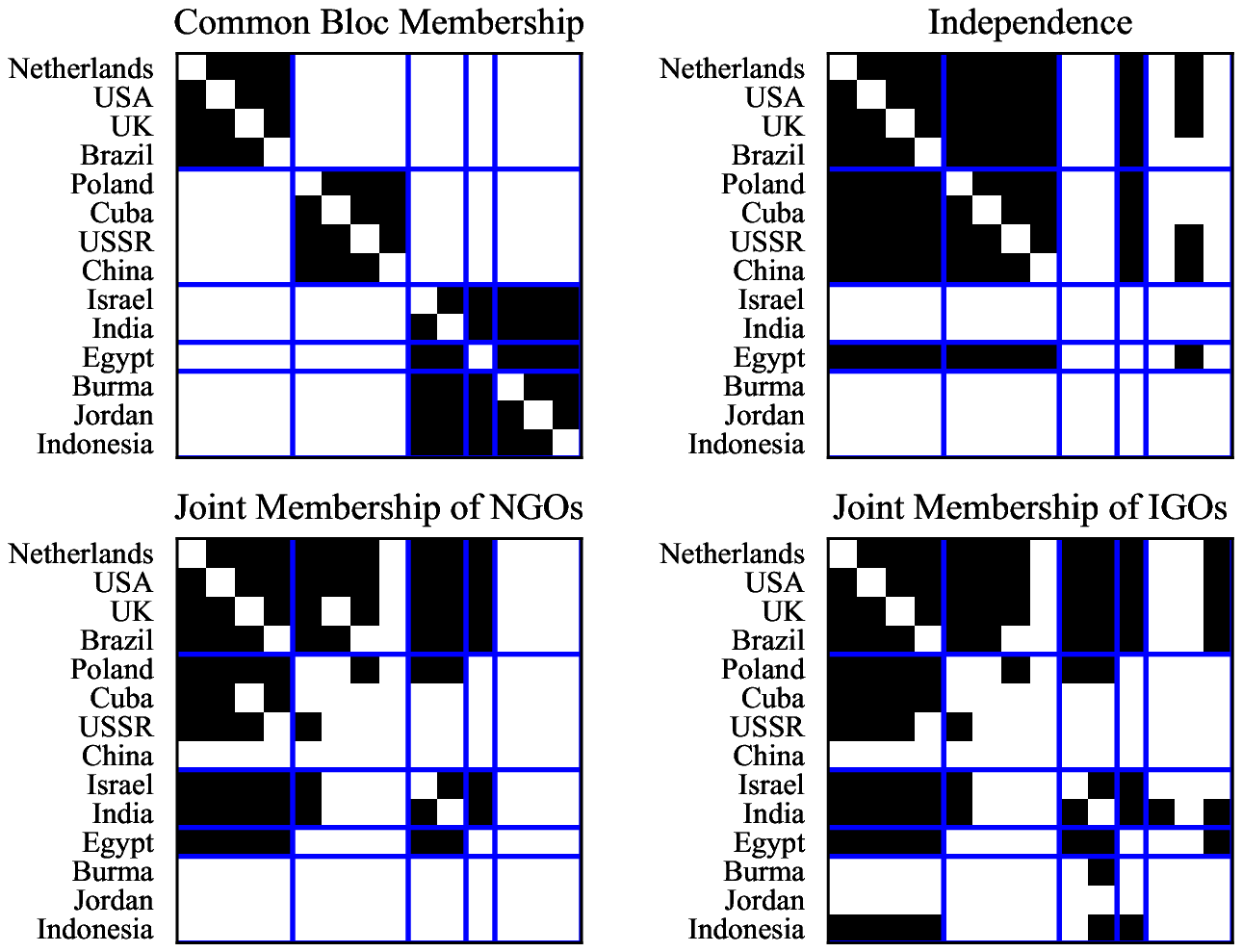}
\caption{Country clusters in the Nations dataset. Each black entry indicates an existing 
relation between two countries.}
\label{fig:clustering}
\end{figure}



Clustering is a useful technique to assess natural groupings of various data items,
including entities in relational databases.
Such cluster information assists knowledge engineers
in the automatic construction of taxonomies from instance data~\cite{factyago}.
In this section,
we demonstrate the utility of B-CP in link-based clustering of the Nations dataset~\cite{kemp}.

The Nations dataset contains 2,024 triples composed of 14 countries and 56 relations.
Experiments were conducted under the same 
hyperparameters that achieved the best results on the 
WN18 dataset. 
We applied hierarchical clustering 
to a factor matrix $\mat{A}\binary $ and divided entities into five clusters,
using the single linkage method with Euclidean distance.
Euclidean distance between binary vectors $\mat{p}\binary $ and
$\mat{q}\binary $ can be computed as follows using Hamming distance,
\begin{equation*}
  d(\mat{p}\binary , \mat{q}\binary )=\sqrt{\sum_{i=1}^N{(p_i\binary -q_i\binary )^2}}
  = 2\sqrt{ h(\overline{\mat{p}}\binary , \overline{\mat{q}}\binary )}
\end{equation*}
which accelerates the computation of clustering.

We show the clustering results in Figure~\ref{fig:clustering}.
The countries are partitioned into 
one group from the western bloc, one
group from the communist bloc,
and three groups for the neutral bloc.
The four relations in Figure~\ref{fig:clustering}
show that this is a reasonable partitioning of Nations dataset,
which indicate that B-CP can accurately
represent the semantic relationships between entities
in binary vector space.

\section{Conclusion}
\label{sec:conclusion}
In this paper,
we have shown that it is possible to obtain binary vectors of entities and relations
in knowledge graphs
that take $10$--$20$ times less storage/memory than
the original representations with floating-point numbers.
In addition, with bitwise operations,
the time required to compute scores was reduced considerably.
Tensor factorization occurs in many machine learning applications,
such as item recommendation~\cite{rec} and web link analysis~\cite{web}.
Applying the proposed B-CP algorithm to the analysis of
other relational datasets
is an interesting avenue for future work.






\appendix


\setcounter{definition}{0}
\renewcommand{\thedefinition}{\Alph{section}.\arabic{definition}}

\setcounter{lemma}{0}
\renewcommand{\thelemma}{\Alph{section}.\arabic{lemma}}

\setcounter{corollary}{0}
\renewcommand{\thecorollary}{\Alph{section}.\arabic{corollary}}

\setcounter{thm}{0}
\renewcommand{\thethm}{\Alph{section}.\arabic{thm}}

\setcounter{figure}{0}

\section{Proof of Theorem~\protect\ref{thm:expressiveness}}
\label{sec:proof}

In this appendix, we prove the full expressiveness of B-CP,
which was stated as Theorem~\ref{thm:expressiveness} without proof in Section~\ref{sec:full-expressiveness}.
To this end,
for a given knowledge graph (or more precisely, its boolean tensor $\mat{\mathcal{X}}$ representing the truth values),
we define a specific B-CP model, denoted by $ \text{BCP}_\Delta^* (\mat{\mathcal{X}}) $,
and show that it indeed faithfully represents $\mat{\mathcal{X}}$ by Eq.~\eqref{eq:bcp-model}
for a certain $\Delta$.

\begin{definition}
  \label{def:bcp-faithful-encoding}
  Let
  $\mat{\mathcal{X}} \in \{0, 1\}^{N_e \times N_e \times N_r}$
  be an arbitrary boolean tensor.
  Let
  $ \text{BCP}_\Delta^* (\mat{\mathcal{X}}) = ( \mat{A}\binary , \mat{B}\binary  , \mat{C}\binary  ) $,
  where
  binary matrices
  $\mat{A}\binary , \mat{B}\binary  \in \{+\Delta, -\Delta\}^{N_e\times 8N_eN_r}$,
  and
  $\mat{C}\binary  \in \{+\Delta, -\Delta\}^{N_r\times 8N_eN_r}$
  are defined as follows.
  \begin{itemize}
  \item
    All three matrices are block matrices of
    $4$-dimensional binary row vectors,
    each of which is either one of 
    \begin{align}
      \mat{p} & = [+\Delta , +\Delta, -\Delta, -\Delta] , \label{eq:p} \\
      \mat{q} & = [+\Delta , -\Delta, +\Delta, -\Delta] , \label{eq:q} \\
      \mat{r} & = [+\Delta , +\Delta, +\Delta, +\Delta] , \label{eq:r}
    \end{align}
    or $-\mat{r}$. 
    In other words,
    $\mat{A}\binary  = [ \mat{a}\binary _{mn} ]_{m \in [N_e] , n \in [2 N_e N_r]}$,
    $\mat{B}\binary  = [ \mat{b}\binary _{mn} ]_{m \in [N_e] , n \in [2 N_e N_r]}$,
    and
    $\mat{C}\binary  = [ \mat{c}\binary _{mn} ]_{m \in [N_r] , n \in [2 N_e N_r]}$,
    where
    $ \mat{a}\binary _{mn} , \mat{b}\binary _{mn} , \mat{c}\binary _{mn} \in \{ \mat{p}, \mat{q}, \mat{r}, -\mat{r} \} $. 


  \item 
    For each $ i \in [N_e] $ and $ \gamma \in [2N_eN_r] $,
    \begin{equation}
      \mat{a}_{i\gamma}\binary  =
      \begin{cases}
        \mat{p} & \text{if } (\gamma {\rm ~mod~}N_e) = (i {\rm ~mod~} N_e),\\
        \mat{q} & \text{otherwise.}
      \end{cases}
      \label{eq:vector-a}
    \end{equation}

  \item 
    Let
    $ \iota(\gamma) = (((\gamma - 1) \bmod 2 N_e) \bmod N_e ) + 1 $
    and
    $ \kappa(\gamma) = \lfloor (\gamma - 1) / 2 N_e \rfloor + 1 $.
    For each
    $ j \in [N_e] $ and $ \gamma \in [2N_e N_r] $,
    \begin{equation}
      \mat{b}_{j\gamma}\binary  =
      \begin{cases}
        \mat{p} & \text{if } x_{\iota(\gamma) j \kappa(\gamma)}=1,\\
        \mat{r} & \text{otherwise.}
      \end{cases}
      \label{eq:vector-b}
    \end{equation}

  \item 
    For each $ k \in [N_r] $ and $\gamma \in [2N_eN_r]$,
    \begin{equation}
      \mat{c}_{k\gamma}\binary  =
      \begin{cases}
        \mat{r} & \text{if } (\gamma  - 1) \bmod 2 N_e < N_e
        \text{ or } \left\lfloor ( \gamma - 1) / 2 N_e  \right\rfloor + 1 = k ,\\
        -\mat{r} &{\rm otherwise.}
      \end{cases}
      \label{eq:vector-c}
    \end{equation}
  \end{itemize}
\end{definition}


\begin{figure}
  \tiny

  \def\colwiz{1.9em}
    
  \begin{tikzpicture}
    \tikzset{every node/.style={outer sep=0pt}}
    \matrix [
      matrix of math nodes,
      left delimiter={[},
      right delimiter={]},
      column sep={\colwiz,between origins},
      row sep=.5ex
    ] (m) {
      \mat{p} & \mat{q} & \cdots & \mat{q} & \mat{p} & \mat{q} & \cdots & \mat{q} & [.3em]
      \mat{p} & \mat{q} & \cdots & \mat{q} & \mat{p} & \mat{q} & \cdots & \mat{q} & [.3em]
      \cdots & [.3em]
      \mat{p} & \mat{q} & \cdots & \mat{q} & \mat{p} & \mat{q} & \cdots & \mat{q} \\
      \mat{q} & \mat{p} & \cdots & \mat{q} & \mat{q} & \mat{p} & \cdots & \mat{q} & [.3em]
      \mat{q} & \mat{p} & \cdots & \mat{q} & \mat{q} & \mat{p} & \cdots & \mat{q} & [.3em]
      \cdots & [.3em]
      \mat{q} & \mat{p} & \cdots & \mat{q} & \mat{q} & \mat{p} & \cdots & \mat{q} \\[-2ex]
      \vdots  & \vdots  & \ddots & \vdots  & \vdots  & \vdots  & \ddots & \vdots  & [.3em] 
      \vdots  & \vdots  & \ddots & \vdots  & \vdots  & \vdots  & \ddots & \vdots  & [.3em]
      \cdots  & [.3em]
      \vdots  & \vdots  & \ddots & \vdots  & \vdots  & \vdots  & \ddots & \vdots \\
      \mat{q} & \mat{q} & \cdots & \mat{p} & \mat{q} & \mat{q} & \cdots & \mat{p} & [.3em] 
      \mat{q} & \mat{q} & \cdots & \mat{p} & \mat{q} & \mat{q} & \cdots & \mat{p} & [.3em]
      \cdots & [.3em]
      \mat{q} & \mat{q} & \cdots & \mat{p} & \mat{q} & \mat{q} & \cdots & \mat{p} \\
    };
    \draw ($0.5*(m-1-8.north)+0.5*(m-1-9.north)$) -- ($0.5*(m-4-8.south)+0.5*(m-4-9.south)$) ;
    \draw ($0.5*(m-1-16.north)+0.5*(m-1-17.north)$) -- ($0.5*(m-4-16.south)+0.5*(m-4-17.south)$) ;
    \draw ($0.5*(m-1-17.north)+0.5*(m-1-18.north)$) -- ($0.5*(m-4-17.south)+0.5*(m-4-18.south)$) ;

    \draw [dotted] ($0.5*(m-1-4.north)+0.5*(m-1-5.north)$) -- ($0.5*(m-4-4.south)+0.5*(m-4-5.south)$) ;
    \draw [dotted] ($0.5*(m-1-12.north)+0.5*(m-1-13.north)$) -- ($0.5*(m-4-12.south)+0.5*(m-4-13.south)$) ;
    \draw [dotted] ($0.5*(m-1-21.north)+0.5*(m-1-22.north)$) -- ($0.5*(m-4-21.south)+0.5*(m-4-22.south)$) ;
    
    \node[left=2.5em of m-1-1, anchor=center] (left-1) {$ \bm{a}_{1:}\binary $};
    \node[left=2.5em of m-2-1, anchor=center] (left-2) {$ \bm{a}_{2:}\binary $};
    \node[left=2.5em of m-3-1, anchor=center] (left-3) {$\vdots$};
    \node[left=2.5em of m-4-1, anchor=center] (left-4) {$ \bm{a}_{N_{e:}}\binary $};

    \node[rectangle,above delimiter=\{] (del-top) at ($ 0.5*(m-1-1) + 0.5*(m-1-4)$) {\tikz \path (0,0) rectangle (3.8 * \colwiz, 0); };
    \node[above=2ex] at (del-top.north) {$N_e$};

    \node[rectangle,above delimiter=\{] (del-top) at ($ 0.5*(m-1-5) + 0.5*(m-1-8)$) {\tikz \path (0,0) rectangle (3.8 * \colwiz, 0); };
    \node[above=2ex] at (del-top.north) {$N_e$};

    \node[rectangle,above delimiter=\{] (del-top) at ($ 0.5*(m-1-9) + 0.5*(m-1-12)$) {\tikz \path (0,0) rectangle (3.8 * \colwiz, 0); };
    \node[above=2ex] at (del-top.north) {$N_e$};

    \node[rectangle,above delimiter=\{] (del-top) at ($ 0.5*(m-1-13) + 0.5*(m-1-16)$) {\tikz \path (0,0) rectangle (3.8 * \colwiz, 0); };
    \node[above=2ex] at (del-top.north) {$N_e$};

    \node[rectangle,above delimiter=\{] (del-top) at ($ 0.5*(m-1-18) + 0.5*(m-1-21)$) {\tikz \path (0,0) rectangle (3.8 * \colwiz, 0); };
    \node[above=2ex] at (del-top.north) {$N_e$};

    \node[rectangle,above delimiter=\{] (del-top) at ($ 0.5*(m-1-22) + 0.5*(m-1-25)$) {\tikz \path (0,0) rectangle (3.8 * \colwiz, 0); };
    \node[above=2ex] at (del-top.north) {$N_e$};

    \node[rectangle, above delimiter=\{, yshift=5ex] (del-top) at ($ 0.5*(m-1-1) + 0.5*(m-1-25)$) {\tikz{\path (m-1-1.west) rectangle (m-1-25.east);} };
    \node[above=2ex] at (del-top.north) {$2 N_e N_r$};
  \end{tikzpicture}

  \begin{tikzpicture}
    \tikzset{every node/.style={outer sep=0pt}}
    \matrix [
      matrix of math nodes,
      left delimiter={[},
      right delimiter={]},
      column sep={\colwiz,between origins},
      row sep=.5ex
    ] (m) {
      \mat{r} & \mat{r} & \cdots & \mat{r} & \mat{r}  & \mat{r}  & \cdots & \mat{r}  & [.3em]
      \mat{r} & \mat{r} & \cdots & \mat{r} & -\mat{r} & -\mat{r} & \cdots & -\mat{r} & [.3em]
      \cdots  & [.3em]
      \mat{r} & \mat{r} & \cdots & \mat{r} & -\mat{r} & -\mat{r} & \cdots & -\mat{r} \\
      \mat{r} & \mat{r} & \cdots & \mat{r} & -\mat{r} & -\mat{r} & \cdots & -\mat{r} & [.3em]
      \mat{r} & \mat{r} & \cdots & \mat{r} & \mat{r}  & \mat{r}  & \cdots & \mat{r}  & [.3em]
      \cdots  & [.3em]
      \mat{r} & \mat{r} & \cdots & \mat{r} & -\mat{r} & -\mat{r} & \cdots & -\mat{r} \\[-2ex]
      \vdots  & \vdots  & \ddots & \vdots  & \vdots   & \vdots   & \ddots & \vdots   & [.3em]
      \vdots  & \vdots  & \ddots & \vdots  & \vdots   & \vdots   & \ddots & \vdots   & [.3em]
      \cdots  & [.3em]
      \vdots  & \vdots  & \ddots & \vdots  & \vdots   & \vdots   & \ddots & \vdots   \\
      \mat{r} & \mat{r} & \cdots & \mat{r} & -\mat{r} & -\mat{r} & \cdots & -\mat{r} & [.3em]
      \mat{r} & \mat{r} & \cdots & \mat{r} & -\mat{r} & -\mat{r} & \cdots & -\mat{r} & [.3em]
      \cdots  & [.3em]
      \mat{r} & \mat{r} & \cdots & \mat{r} & \mat{r}  & \mat{r}  & \cdots & \mat{r}  \\
    };
    \draw ($0.5*(m-1-8.north)+0.5*(m-1-9.north)$) -- ($0.5*(m-4-8.south)+0.5*(m-4-9.south)$) ;
    \draw ($0.5*(m-1-16.north)+0.5*(m-1-17.north)$) -- ($0.5*(m-4-16.south)+0.5*(m-4-17.south)$) ;
    \draw ($0.5*(m-1-17.north)+0.5*(m-1-18.north)$) -- ($0.5*(m-4-17.south)+0.5*(m-4-18.south)$) ;

    \draw [dotted] ($0.5*(m-1-4.north)+0.5*(m-1-5.north)$) -- ($0.5*(m-4-4.south)+0.5*(m-4-5.south)$) ;
    \draw [dotted] ($0.5*(m-1-12.north)+0.5*(m-1-13.north)$) -- ($0.5*(m-4-12.south)+0.5*(m-4-13.south)$) ;
    \draw [dotted] ($0.5*(m-1-21.north)+0.5*(m-1-22.north)$) -- ($0.5*(m-4-21.south)+0.5*(m-4-22.south)$) ;

    \node[left=2.5em of m-1-1, anchor=center] (left-1) {$ \bm{c}_{1:}\binary $};
    \node[left=2.5em of m-2-1, anchor=center] (left-2) {$ \bm{c}_{2:}\binary $};
    \node[left=2.5em of m-3-1, anchor=center] (left-3) {$\vdots$};
    \node[left=2.5em of m-4-1, anchor=center] (left-4) {$ \bm{c}_{N_{e:}}\binary $};

    \node[rectangle,above delimiter=\{] (del-top) at ($ 0.5*(m-1-1) + 0.5*(m-1-4)$) {\tikz \path (0,0) rectangle (3.8 * \colwiz, 0); };
    \node[above=2ex] at (del-top.north) {$N_e$};

    \node[rectangle,above delimiter=\{] (del-top) at ($ 0.5*(m-1-5) + 0.5*(m-1-8)$) {\tikz \path (0,0) rectangle (3.8 * \colwiz, 0); };
    \node[above=2ex] at (del-top.north) {$N_e$};

    \node[rectangle,above delimiter=\{] (del-top) at ($ 0.5*(m-1-9) + 0.5*(m-1-12)$) {\tikz \path (0,0) rectangle (3.8 * \colwiz, 0); };
    \node[above=2ex] at (del-top.north) {$N_e$};

    \node[rectangle,above delimiter=\{] (del-top) at ($ 0.5*(m-1-13) + 0.5*(m-1-16)$) {\tikz \path (0,0) rectangle (3.8 * \colwiz, 0); };
    \node[above=2ex] at (del-top.north) {$N_e$};

    \node[rectangle,above delimiter=\{] (del-top) at ($ 0.5*(m-1-18) + 0.5*(m-1-21)$) {\tikz \path (0,0) rectangle (3.8 * \colwiz, 0); };
    \node[above=2ex] at (del-top.north) {$N_e$};

    \node[rectangle,above delimiter=\{] (del-top) at ($ 0.5*(m-1-22) + 0.5*(m-1-25)$) {\tikz \path (0,0) rectangle (3.8 * \colwiz, 0); };
    \node[above=2ex] at (del-top.north) {$N_e$};

    \node[rectangle, above delimiter=\{, yshift=5ex] (del-top) at ($ 0.5*(m-1-1) + 0.5*(m-1-25)$) {\tikz{\path (m-1-1.west) rectangle (m-1-25.east);} };
    \node[above=2ex] at (del-top.north) {$2 N_e N_r$};

  \end{tikzpicture}

  \caption{Regularities in $\mat{A}\binary$ and $\mat{C}\binary$ of $\text{BCP}^*_\Delta (\mat{\mathcal{X}})$.
    Solid vertical lines separate ``pages,'' and dotted lines separate ``halfpages'' within each page.
    Matrix $\mat{B}\binary$ also shows some regularities; see Lemma~\ref{lem:faithful-encoding-periodicity}\ref{itm:b-periodicity}.
    It is not depicted here as its elements are dependent on $\mat{\mathcal{X}}$.
  }
\label{fig:general_factor_matrix}
\end{figure}



Figure~\ref{fig:general_factor_matrix}
illustrates
the binary factor matrices of
Definition~\ref{def:bcp-faithful-encoding}.
As seen from the figure,
each row of the matrices exhibits a certain pattern every $2N_e$ blocks, which we call a \emph{page}.
Each page can further be divided into a pair of \emph{halfpages} consisting of $N_e$ blocks each, which also show periodic patterns.
These patterns are stated precisely in Lemma~\ref{lem:faithful-encoding-periodicity} below.
It extensively uses addressing functions $ \alpha(k,m) = 2N_e(k-1) + m$ and $\beta(k,m) = \alpha(k,m) + N_e$
to specify the position of an individual block within the page it belongs to;
function $\alpha(k,m)$ designates the $m$th block within the first halfpage of the $k$th page,
whereas $\beta(k,m)$ designates the $m$th block in the second halfpage. 
There is a one-to-one correspondence between linear addressing by $ \gamma \in [2 N_e N_r]$
and 2-dimensional indexing by $k \in [N_r]$ and $m \in [N_e]$, combined with $\alpha$ and $\beta$ to specify a halfpage.


\begin{lemma}
  \label{lem:faithful-encoding-periodicity}
  Let $\mat{p}$, $\mat{q}$, and $\mat{r}$ be as given by Eqs.~\eqref{eq:p}--\eqref{eq:r}.
  The following statements \ref{itm:a-periodicity}--\ref{itm:c-periodicity-2} hold for
  block matrices
  $\mat{A}\binary  = [ \mat{a}\binary _{mn} ]$, $\mat{B}\binary  = [ \mat{b}\binary _{mn} ]$, $\mat{C}\binary  = [ \mat{c}\binary _{mn} ]$,
  where
  $ (\mat{A}\binary $, $\mat{B}\binary $, $\mat{C}\binary ) = \text{BCP}_\Delta^*(\mat{\mathcal{X}}) $
  is given by Definition~\ref{def:bcp-faithful-encoding}.
  \begin{enumerate}[label=(\alph*)]

  \item 
    \label{itm:a-periodicity}
    For any $i \in [N_e]$ and $ \gamma \in [ (2 N_r - 1) N_e ] $,
    \begin{equation}
      \mat{a}\binary _{i\gamma} = \mat{a}\binary _{i,(\gamma + N_e)} .
      \label{eq:a-periodicity}
    \end{equation}
    In particular, for any $ i, m \in [N_e] $ and $ k \in [N_r] $,
    \begin{equation}
      \mat{a}\binary _{i\alpha(k,m)} = \mat{a}\binary _{i \beta(k,m)} .
      \label{eq:a-periodicity-alpha-beta}
    \end{equation}

  \item 
    \label{itm:a-alpha}
    For any $i \in [N_e]$ and $ k \in [N_r] $,
    \begin{equation}
      \mat{a}\binary _{i\alpha(k,i)} = \mat{p} .
      \label{eq:a-alpha}
    \end{equation}

  \item 
    \label{itm:a-non-alpha}
    For any $ i \in [N_e] $, $ m' \in [N_e] \backslash \{ i \} $, and $ k \in [N_r] $,
    \begin{equation}
      \mat{a}\binary _{i\alpha(k,m')} = \mat{q} .
      \label{eq:a-non-alpha}
    \end{equation}

  \item 
    \label{itm:b-periodicity}
    For any $j, m \in [N_e]$ and $k \in [N_r]$,
    \begin{equation}
      \mat{b}\binary _{ j\alpha(k,m) } = \mat{b}\binary _{j\beta(k,m) } .
      \label{eq:b-periodicity}
    \end{equation}

  \item 
    \label{itm:c-periodicity}
    For any $k \in [N_r]$ and $ m \in [ N_e ] $,
    \begin{equation}
      \mat{c}_{k \alpha(k,m)}\binary  = \mat{c}_{k \beta(k,m)}\binary  = \mat{r}.
      \label{eq:c-periodicity}
    \end{equation}

  \item 
    \label{itm:c-periodicity-2}
    For any $k \in [N_r], m \in [N_e]$ and $n^\prime \in [N_r] \backslash \{ k\}$,
    \begin{equation}
      \mat{c}_{k \alpha(n^\prime, m)}\binary  = -\mat{c}_{k \beta(n^\prime, m)}\binary  .
      \label{eq:c-periodicity-2}
    \end{equation}
  \end{enumerate}
\end{lemma}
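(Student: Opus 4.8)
The plan is to reduce every one of the six statements to elementary modular arithmetic on the linear address $\gamma$, exploiting the fact that $\alpha(k,m)$ and $\beta(k,m) = \alpha(k,m) + N_e$ differ by exactly $N_e$ and sit in the first and second halfpage of the same page. The central computation, which I would carry out once at the outset, is to evaluate the three ``coordinate extractors'' $(\gamma-1)\bmod 2N_e$, $\lfloor(\gamma-1)/2N_e\rfloor$, and $\gamma \bmod N_e$ at $\gamma = \alpha(k,m)$ and $\gamma = \beta(k,m)$. Since $\alpha(k,m)-1 = 2N_e(k-1) + (m-1)$ with $0 \le m-1 \le N_e-1 < 2N_e$, I obtain $(\alpha(k,m)-1)\bmod 2N_e = m-1$ and $\lfloor(\alpha(k,m)-1)/2N_e\rfloor = k-1$, hence $\iota(\alpha(k,m)) = m$ and $\kappa(\alpha(k,m)) = k$. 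For $\beta$ the extra $+N_e$ pushes the residue into $[N_e, 2N_e-1]$ without changing the quotient, so $(\beta(k,m)-1)\bmod 2N_e = m+N_e-1 \ge N_e$, the quotient is still $k-1$, and likewise $\iota(\beta(k,m)) = ((m+N_e-1)\bmod N_e)+1 = m$, $\kappa(\beta(k,m)) = k$. Finally, because $2N_e(k-1)$ is a multiple of $N_e$, I have $\alpha(k,m)\bmod N_e = m\bmod N_e$.

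With these identities in hand, each claim would drop out by substitution into the relevant definition. For part~\ref{itm:a-periodicity}, Eq.~\eqref{eq:vector-a} shows $\mat{a}\binary_{i\gamma}$ depends on $\gamma$ only through $\gamma\bmod N_e$, and $(\gamma+N_e)\bmod N_e = \gamma\bmod N_e$ gives \eqref{eq:a-periodicity}; the specialization \eqref{eq:a-periodicity-alpha-beta} then follows since $\beta(k,m)=\alpha(k,m)+N_e$ and $\alpha(k,m)\le(2N_r-1)N_e$ keeps the index in range. Parts~\ref{itm:a-alpha} and~\ref{itm:a-non-alpha} use $\alpha(k,m)\bmod N_e = m\bmod N_e$ together with the injectivity of $m\mapsto m\bmod N_e$ on $[N_e]$, so the selector $(\gamma\bmod N_e)=(i\bmod N_e)$ in \eqref{eq:vector-a} fires precisely when $m=i$. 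Part~\ref{itm:b-periodicity} is immediate from the computed equalities $\iota(\alpha(k,m))=\iota(\beta(k,m))$ and $\kappa(\alpha(k,m))=\kappa(\beta(k,m))$, which make the truth-value lookup $x_{\iota(\gamma)\,j\,\kappa(\gamma)}$ in \eqref{eq:vector-b} identical at the two addresses.

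For the two statements about $\mat{C}\binary$ I would read the value directly off Eq.~\eqref{eq:vector-c}. At $\gamma=\alpha(n',m)$ the residue $m-1<N_e$ always triggers the first disjunct, so $\mat{c}\binary_{k\alpha(n',m)}=\mat{r}$ regardless of $n'$ and $k$; at $\gamma=\beta(n',m)$ the residue $m+N_e-1\ge N_e$ fails the first disjunct, so the outcome is decided solely by whether $\kappa(\beta(n',m))=n'$ equals $k$. When $n'=k$ (part~\ref{itm:c-periodicity}) the second disjunct holds and the block is $\mat{r}$, giving $\mat{c}\binary_{k\alpha(k,m)}=\mat{c}\binary_{k\beta(k,m)}=\mat{r}$; when $n'\ne k$ (part~\ref{itm:c-periodicity-2}) it fails and the block is $-\mat{r}$, yielding $\mat{c}\binary_{k\alpha(n',m)}=\mat{r}=-\mat{c}\binary_{k\beta(n',m)}$.

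I expect the only real friction to be index bookkeeping at the boundaries — in particular the $m=N_e$ (resp.\ $i=N_e$) case, where $m\bmod N_e = 0$ rather than $N_e$, and verifying that the tight bounds $0\le m-1\le N_e-1$ and $N_e\le m+N_e-1\le 2N_e-1$ pin down the quotient and the halfpage selector without any off-by-one slip. Once the coordinate-extractor identities are stated cleanly, the six parts are routine.
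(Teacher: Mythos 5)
Your proposal is correct and follows essentially the same route as the paper: both reduce each part to direct substitution into Eqs.~\eqref{eq:vector-a}--\eqref{eq:vector-c} after computing $(\gamma-1)\bmod 2N_e$, $\lfloor(\gamma-1)/2N_e\rfloor$, and $\gamma\bmod N_e$ at $\gamma=\alpha(k,m)$ and $\gamma=\beta(k,m)$. You are in fact slightly more careful than the paper at the $m=N_e$ boundary, where the correct selector condition is $\alpha(k,i)\bmod N_e = i\bmod N_e$ rather than $=i$.
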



\begin{proof}\leavevmode
  \begin{enumerate}[label=(\alph*)]
  \item 
    The statement follows from $(\gamma \bmod N_e) = ((\gamma + N_e) \bmod N_e)$ and the definition of $\mat{a}\binary $ given by Eq.~\eqref{eq:vector-a}.
  \item 
    Follows from $\alpha(k, i) \bmod N_e = i$ and Eq.~\eqref{eq:vector-a}.
  \item 
    Follows from $\alpha(k, m') \bmod N_e \ne i$ (because $m' \in [N_e]\backslash \{ i \}$) and Eq.~\eqref{eq:vector-a}.
  \item 
    Follows from
    $m = \iota(\alpha(n, m)) = \iota(\beta(n, m)) $ and $n = \kappa(\alpha(n, m)) = \kappa(\beta(n, m))$.
  \item 
    Follows from Eq.~\eqref{eq:vector-c}.
  \item 
    Follows from
    Eq. \eqref{eq:vector-c},
    specifically,
    $\mat{c}_{k\alpha(n^\prime, m)}\binary  = \mat{r}$
    and
    $\mat{c}_{k\beta(n^\prime, m)}\binary = -\mat{r}$.
    \qedhere
  \end{enumerate}
\end{proof}


The following corollary is a direct consequence of Lemma~\ref{lem:faithful-encoding-periodicity}.

\begin{corollary}
  \label{cor:alpha-beta-terms-are-equal}
  For any $i, j \in [N_e]$ and $ k \in [N_r] $,
  \begin{equation*}
    ( \mat{a}_{i\alpha(k,i)}\binary  \circ \mat{b}_{j\alpha(k,i)}\binary  ) \mat{c}_{k\alpha(k,i)}\transposeb
    =
    ( \mat{a}_{i\beta (k,i)}\binary  \circ \mat{b}_{j\beta (k,i)}\binary  ) \mat{c}_{k\beta (k,i)}\transposeb .
  \end{equation*}
\end{corollary}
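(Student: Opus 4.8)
The plan is to prove the identity factorwise: I would show that each of the three binary block vectors on the left-hand side coincides with its counterpart on the right, after which equality of the Hadamard products, and therefore of the scalar inner products, is immediate. The key observation is that the corollary compares the block at the first-halfpage address $\alpha(k,i)$ with the block at the matching second-halfpage address $\beta(k,i)$ within the same page $k$, and that Lemma~\ref{lem:faithful-encoding-periodicity} has already recorded exactly the periodicities needed for this comparison.

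Concretely, I would specialize Lemma~\ref{lem:faithful-encoding-periodicity} by setting the within-halfpage block index $m$ equal to the subject index $i$. Part~\ref{itm:a-periodicity} then yields $\mat{a}_{i\alpha(k,i)}\binary = \mat{a}_{i\beta(k,i)}\binary$ (part~\ref{itm:a-alpha} further identifies this common value as $\mat{p}$); part~\ref{itm:b-periodicity} yields $\mat{b}_{j\alpha(k,i)}\binary = \mat{b}_{j\beta(k,i)}\binary$; and part~\ref{itm:c-periodicity} yields $\mat{c}_{k\alpha(k,i)}\binary = \mat{c}_{k\beta(k,i)}\binary = \mat{r}$. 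Substituting these three equalities into $(\mat{a}_{i\alpha(k,i)}\binary \circ \mat{b}_{j\alpha(k,i)}\binary)\mat{c}_{k\alpha(k,i)}\transposeb$ turns it term by term into the identical expression written in the $\beta$ addresses, which is exactly the claimed identity.

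There is no real obstacle in this argument; it is a direct bookkeeping consequence of the lemma. The only subtlety worth flagging is the role of the page index in the $\mat{c}$ factor: because both addresses lie in page $k$ and we are reading the $k$th row $\mat{c}_{k:}\binary$, the relation index and the page index agree, so the favorable case part~\ref{itm:c-periodicity} (both blocks equal $\mat{r}$) applies rather than the sign-flipping case part~\ref{itm:c-periodicity-2}. This is precisely why the statement is asserted at the ``diagonal'' position $\alpha(k,i)$, $\beta(k,i)$, and it is the one place where one must verify that the correct case of the lemma is being invoked.
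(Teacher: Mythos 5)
Your proof is correct and is essentially the paper's own argument: the paper states this corollary as a direct consequence of Lemma~\ref{lem:faithful-encoding-periodicity} without writing out the details, and your factorwise specialization at $m=i$ using parts \ref{itm:a-periodicity}, \ref{itm:b-periodicity}, and \ref{itm:c-periodicity} is exactly the intended derivation. Your remark about why the non-sign-flipping case \ref{itm:c-periodicity} (rather than \ref{itm:c-periodicity-2}) applies at the diagonal address is a correct and worthwhile clarification, but not a departure from the paper's route.
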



From Lemma~\ref{lem:faithful-encoding-periodicity}\ref{itm:a-periodicity}, \ref{itm:b-periodicity}, and \ref{itm:c-periodicity-2}, we also have the following.

\begin{corollary}
  \label{cor:zero-sum-abc}
  For any $i, j, m \in [N_e]$, $k \in [N_r]$, and $n^\prime \in [N_r] \backslash \{k\}$,
  \begin{equation*}
    (\mat{a}_{i\alpha(n^\prime, m)}\binary  \circ \mat{b}_{j\alpha(n^\prime, m)}\binary ) \mat{c}_{k\alpha(n^\prime, m)}\transposeb
    +
    (\mat{a}_{i\beta (n^\prime, m)}\binary  \circ \mat{b}_{j\beta (n^\prime, m)}\binary ) \mat{c}_{k\beta (n^\prime, m)}\transposeb
    = 0 .
  \end{equation*}
\end{corollary}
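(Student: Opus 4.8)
The plan is to leverage the structural regularities already recorded in Lemma~\ref{lem:faithful-encoding-periodicity}: the blocks of $\mat{A}\binary$ and $\mat{B}\binary$ repeat identically across the two halfpages of a page, whereas the blocks of $\mat{C}\binary$ flip sign between the halfpages of any page whose index differs from the relation index $k$. Concretely, I would fix $i, j, m \in [N_e]$, $k \in [N_r]$, and $n^\prime \in [N_r] \backslash \{k\}$, and observe that the two summands in the claimed identity correspond to the block at position $\alpha(n^\prime, m)$ (the $m$th block of the first halfpage of the $n^\prime$th page) and the block at position $\beta(n^\prime, m)$ (the $m$th block of the second halfpage of the same page).

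First I would apply Lemma~\ref{lem:faithful-encoding-periodicity}\ref{itm:a-periodicity}, with its universally quantified page index instantiated to $n^\prime$, to obtain $\mat{a}_{i\alpha(n^\prime, m)}\binary = \mat{a}_{i\beta(n^\prime, m)}\binary$, and likewise Lemma~\ref{lem:faithful-encoding-periodicity}\ref{itm:b-periodicity} to obtain $\mat{b}_{j\alpha(n^\prime, m)}\binary = \mat{b}_{j\beta(n^\prime, m)}\binary$. Taking the Hadamard product of these two equalities shows that both summands share the same leading row vector
\begin{equation*}
  \mat{v} = \mat{a}_{i\alpha(n^\prime, m)}\binary \circ \mat{b}_{j\alpha(n^\prime, m)}\binary = \mat{a}_{i\beta(n^\prime, m)}\binary \circ \mat{b}_{j\beta(n^\prime, m)}\binary ,
\end{equation*}
so the left-hand side collapses to $\mat{v}\,(\mat{c}_{k\alpha(n^\prime, m)}\binary + \mat{c}_{k\beta(n^\prime, m)}\binary)\transpose$. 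Finally I would invoke Lemma~\ref{lem:faithful-encoding-periodicity}\ref{itm:c-periodicity-2}, whose hypothesis $n^\prime \neq k$ is exactly the standing assumption, to get $\mat{c}_{k\alpha(n^\prime, m)}\binary = -\mat{c}_{k\beta(n^\prime, m)}\binary$; the parenthesized vector is then the zero vector, and the entire expression vanishes.

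Since the heavy lifting has already been carried out inside Lemma~\ref{lem:faithful-encoding-periodicity}, I expect no genuine obstacle beyond careful index bookkeeping: one must apply the periodicity statements~\ref{itm:a-periodicity} and~\ref{itm:b-periodicity} at the page index $n^\prime$ rather than at $k$, and recognize that the sign flip furnished by~\ref{itm:c-periodicity-2} is available precisely because $n^\prime \neq k$. Factoring out the common Hadamard product $\mat{v}$ before cancelling is the one manipulative step that turns these three facts into the stated zero sum.
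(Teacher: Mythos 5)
Your proposal is correct and follows exactly the route the paper intends: the corollary is stated there as an immediate consequence of Lemma~\ref{lem:faithful-encoding-periodicity}\ref{itm:a-periodicity}, \ref{itm:b-periodicity}, and \ref{itm:c-periodicity-2}, and your write-up simply makes explicit the factoring of the common Hadamard product and the sign cancellation of the $\mat{c}\binary$ blocks. No gaps.
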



These lemmas and corollaries lead to the following:

\begin{lemma}
  \label{lem:sum-of-remaining-terms-are-null}
  For any $ i, j \in [N_e] $ and $ k \in [N_r] $,
  \begin{equation*}
    \sum_{ \gamma \in [2 N_e N_r] \backslash \{ \alpha(k,i), \beta(k,i) \} }
    \!\!\!\!\!\!\!\!\!\!\!\!
    ( \mat{a}_{i\gamma}\binary  \circ \mat{b}_{j\gamma}\binary )\mat{c}_{k\gamma}\transposeb
    = 0 .
  \end{equation*}
\end{lemma}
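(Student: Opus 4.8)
The plan is to partition the index set $ [2 N_e N_r] \backslash \{ \alpha(k,i), \beta(k,i) \} $ according to the page/halfpage structure of Definition~\ref{def:bcp-faithful-encoding} and show that every resulting group contributes nothing to the sum. Recall the one-to-one correspondence noted before Lemma~\ref{lem:faithful-encoding-periodicity}: the linear address $\gamma$ corresponds bijectively to a page index $n \in [N_r]$, a within-halfpage index $m \in [N_e]$, and a choice of halfpage, so that every $\gamma$ equals exactly one of $\alpha(n,m)$ or $\beta(n,m)$. Under this correspondence, deleting the two blocks $\alpha(k,i)$ and $\beta(k,i)$ removes precisely the two halfpage positions associated with $(n,m) = (k,i)$, so the surviving indices split cleanly into (i) the blocks of page $k$ with $m \ne i$, namely $\alpha(k,m)$ and $\beta(k,m)$ for $m \in [N_e] \backslash \{i\}$, and (ii) the blocks of every other page $n' \ne k$.

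First I would dispatch the other-page contribution (ii). For each fixed $n' \ne k$ and each $m \in [N_e]$, Corollary~\ref{cor:zero-sum-abc} states exactly that the $\alpha(n',m)$ and $\beta(n',m)$ terms cancel, so summing over all $m$ and all $n' \ne k$ annihilates the entire block of type (ii).

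Next I would treat the same-page contribution (i), which behaves differently: here the two paired halfpage blocks do \emph{not} cancel, but each vanishes on its own. For $m \ne i$, part~\ref{itm:a-non-alpha} of Lemma~\ref{lem:faithful-encoding-periodicity} gives $\mat{a}_{i\alpha(k,m)}\binary = \mat{q}$, and part~\ref{itm:a-periodicity} propagates this to the second halfpage, $\mat{a}_{i\beta(k,m)}\binary = \mat{q}$; meanwhile part~\ref{itm:c-periodicity} gives $\mat{c}_{k\alpha(k,m)}\binary = \mat{c}_{k\beta(k,m)}\binary = \mat{r}$. Since every entry of $\mat{r}$ equals $\Delta$, the triple product $(\mat{a}_{i\gamma}\binary \circ \mat{b}_{j\gamma}\binary)\mat{c}_{k\gamma}\transposeb$ collapses to $\Delta\,(\mat{q} \cdot \mat{b}_{j\gamma}\binary)$, and by Definition~\ref{def:bcp-faithful-encoding} the block $\mat{b}_{j\gamma}\binary$ is either $\mat{p}$ or $\mat{r}$. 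A one-line check on Eqs.~\eqref{eq:p}--\eqref{eq:r} shows $\mat{q} \cdot \mat{p} = \mat{q} \cdot \mat{r} = 0$, so each such term is individually zero.

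Combining the two cases, every surviving term lies in a group that evaluates to zero, which proves the lemma. The step I would be most careful about is recognizing that the cancellation mechanism differs between the two cases: the $n' \ne k$ blocks cancel in $\alpha$/$\beta$ pairs via Corollary~\ref{cor:zero-sum-abc}, whereas the $m \ne i$ blocks of page $k$ each vanish outright through the orthogonality of $\mat{q}$ to both $\mat{p}$ and $\mat{r}$; conflating the two would break the argument, since the in-page pairs are in fact equal (not opposite). The remaining routine obligation is simply to confirm that the index partition into (i) and (ii) is exhaustive and disjoint, so that no block is double-counted or omitted.
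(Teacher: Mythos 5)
Your proposal is correct and follows essentially the same route as the paper's proof: the same partition into other-page blocks (killed in $\alpha$/$\beta$ pairs via Corollary~\ref{cor:zero-sum-abc}) and same-page blocks with $m \ne i$ (each individually zero because $(\mat{q} \circ \mat{p})\mat{r}\transpose = (\mat{q} \circ \mat{r})\mat{r}\transpose = 0$). Your closing remark about not conflating the two cancellation mechanisms is exactly the right point of care, and the paper handles it the same way.
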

\begin{proof}
  Let $\mat{p}$, $\mat{q}$, and $\mat{r}$ be as given by Eqs.~\eqref{eq:p}--\eqref{eq:r}.
  From Lemma~\ref{lem:faithful-encoding-periodicity}\ref{itm:a-periodicity}, \ref{itm:a-non-alpha}, \ref{itm:b-periodicity}, and \ref{itm:c-periodicity},
  we have for any $m^\prime \in [N_e] \backslash \{i\}$,
  \begin{align*}
    \mat{a}_{i,\alpha(k, m^\prime)}\binary  & = \mat{a}_{i,\beta(k, m^\prime)}\binary  = \mat{q} , \\
    \mat{b}_{j,\alpha(k, m^\prime)}\binary  & = \mat{b}_{j,\beta(k, m^\prime)}\binary  , \\
    \mat{c}_{k,\alpha(k, m^\prime)}\binary  & = \mat{c}_{k,\beta(k, m^\prime)}\binary  = \mat{r} .
  \end{align*}

  Thus, for any $m^\prime \in [N_e] \backslash \{i\}$,
  \begin{align}
    (     \mat{a}_{ i\alpha(k, m^\prime) }\binary  \circ \mat{b}_{ j \alpha(k, m^\prime)}\binary  ) \mat{c}_{ k \alpha(k, m^\prime) }\transposeb
    & = ( \mat{a}_{ i\beta (k, m^\prime) }\binary  \circ \mat{b}_{ j \beta (k, m^\prime)}\binary  ) \mat{c}_{ k \beta (k, m^\prime) }\transposeb  \nonumber \\
    & = ( \mat{q} \circ \mat{b}_{ j \alpha(k, m^\prime)}\binary  )  \mat{r} \transpose  \nonumber \\
    & = 0.
      \label{eq:zero-abc}
  \end{align}
  The last equality holds because $\mat{b}_{ j \alpha(k, m^\prime)}\binary $ is either $\mat{p}$ or $\mat{r}$ by definition,
  and $ ( \mat{q} \circ \mat{p} )  \mat{r} \transpose  = ( \mat{q} \circ \mat{r} )  \mat{r} \transpose  = 0 $.

  Now,
  for any $i, j \in [N_e]$ and $ k \in [N_r] $,
  \begin{align*}
    \text{\rlap{$ \sum_{ \gamma \in [2 N_e N_r] \backslash \{ \alpha(k,i), \beta(k,i) \} } \!\!\!\!\!\!\!\!
    ( \mat{a}_{i\gamma}\binary \circ \mat{b}_{j\gamma}\binary )\mat{c}_{k\gamma}\transposeb $}} \nonumber \\
    \quad & =
            \!\!\!\!
            \sum_{ \substack{ n^\prime \in [N_r]\backslash\{k\} \\ m \in [N_e] } }
            \underbrace{
              \left(
                ( \mat{a}_{i\alpha(n^\prime, m)}\binary  \circ \mat{b}_{j\alpha(n^\prime, m)}\binary  ) \mat{c}_{k\alpha(n^\prime, m)}\transposeb
                +
                ( \mat{a}_{i\beta (n^\prime, m)}\binary  \circ \mat{b}_{j\beta (n^\prime, m)}\binary  ) \mat{c}_{k\beta (n^\prime, m)}\transposeb
              \right)
            }_{ = 0 \; \text{\rlap{ by Cororally~\ref{cor:zero-sum-abc}}} }
            \\
          & \quad +
            \!\!\!\!
            \sum_{m^\prime \in [N_e] \backslash \{i\}}
            \underbrace{
              \left(
                (\mat{a}_{i\alpha(k, m^\prime)}\binary  \circ \mat{b}_{j\alpha(k, m^\prime)}\binary ) \mat{c}_{k\alpha(k, m^\prime)}\transposeb
                +
                (\mat{a}_{i\beta (k, m^\prime)}\binary  \circ \mat{b}_{j\beta (k, m^\prime)}\binary ) \mat{c}_{k\beta (k, m^\prime)}\transposeb
              \right)
            }_{ = 0 \; \text{\rlap{ by Eq.~\eqref{eq:zero-abc}}} }
            \\
          & = 0. 
            \qedhere
  \end{align*}
\end{proof}


We are now ready to prove Theorem~\ref{thm:expressiveness}, restated here as Theorem~\ref{thm:expressiveness-restatement}.


\begin{thm}[Theorem~\ref{thm:expressiveness}; full expressiveness of B-CP]
  \label{thm:expressiveness-restatement}
  For an arbitrary binary tensor
  $\mat{\mathcal{X}} \in \{0, 1\}^{N_e \times N_e \times N_r}$,
  there exists a B-CP decomposition with binary factor matrices
  $ \mat{A}\binary  , \mat{B}\binary  \in \{+\Delta, -\Delta\}^{N_e \times D} $  and $\mat{C}\binary  \in \{+\Delta, -\Delta\}^{N_r \times D} $
  for some $D$ and $\Delta$,
  such that
  \begin{equation}
    \mat{\mathcal{X}} = \sum_{ d \in [D] } \mat{a}\binary _d \otimes \mat{b}\binary _d \otimes \mat{c}\binary _d .
    \label{eq:b-cp-model}
  \end{equation}
\end{thm}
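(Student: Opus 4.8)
The plan is to verify that the explicit construction $\text{BCP}_\Delta^*(\mat{\mathcal{X}})$ of Definition~\ref{def:bcp-faithful-encoding} satisfies Eq.~\eqref{eq:b-cp-model}, so that exhibiting this single candidate suffices; the dimension is already fixed at $D = 8 N_e N_r$, leaving only the scalar $\Delta$ to be pinned down at the very end. Because all three factor matrices are block matrices whose blocks are $4$-dimensional row vectors, I would first rewrite the $(i,j,k)$ entry of the right-hand side of Eq.~\eqref{eq:b-cp-model} as a sum over the $2 N_e N_r$ blocks,
\[
  \sum_{\gamma \in [2 N_e N_r]} ( \mat{a}_{i\gamma}\binary \circ \mat{b}_{j\gamma}\binary ) \mat{c}_{k\gamma}\transposeb ,
\]
so that the task reduces to showing this scalar equals $x_{ijk}$.

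Next I would isolate the two blocks addressed by $\alpha(k,i)$ and $\beta(k,i)$. Lemma~\ref{lem:sum-of-remaining-terms-are-null} already guarantees that every block other than these two contributes nothing, so the entire sum collapses to the two surviving terms, and Corollary~\ref{cor:alpha-beta-terms-are-equal} shows that these two are equal; hence the sum equals exactly twice the single term at $\gamma = \alpha(k,i)$. This cancellation is where the real leverage lies, but since those results are already in hand the step is immediate.

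It then remains to evaluate that one block from the definitions. Lemma~\ref{lem:faithful-encoding-periodicity}\ref{itm:a-alpha} gives $\mat{a}_{i\alpha(k,i)}\binary = \mat{p}$ and Lemma~\ref{lem:faithful-encoding-periodicity}\ref{itm:c-periodicity} gives $\mat{c}_{k\alpha(k,i)}\binary = \mat{r}$. The key point is that $\iota(\alpha(k,i)) = i$ and $\kappa(\alpha(k,i)) = k$, so Eq.~\eqref{eq:vector-b} selects precisely the tensor entry $x_{ijk}$: it yields $\mat{b}_{j\alpha(k,i)}\binary = \mat{p}$ when $x_{ijk} = 1$ and $\mat{b}_{j\alpha(k,i)}\binary = \mat{r}$ otherwise. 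A direct computation with the fixed vectors gives $(\mat{p}\circ\mat{p})\mat{r}\transpose = 4\Delta^3$ and $(\mat{p}\circ\mat{r})\mat{r}\transpose = 0$, so the whole block sum equals $8\Delta^3 x_{ijk}$.

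Finally, taking $\Delta = 1/2$ forces $8\Delta^3 = 1$, so the $(i,j,k)$ entry equals $x_{ijk}$ for every $i, j \in [N_e]$ and $k \in [N_r]$, which is exactly Eq.~\eqref{eq:b-cp-model}. The main obstacle is really the index bookkeeping hidden in confirming $\iota(\alpha(k,i)) = i$ and $\kappa(\alpha(k,i)) = k$ — this is what makes the one surviving block pick out the correct entry of $\mat{\mathcal{X}}$ — together with the combinatorial cancellation packaged into Lemma~\ref{lem:sum-of-remaining-terms-are-null}; once those are granted, the remainder is routine arithmetic over $\mat{p}$, $\mat{q}$, $\mat{r}$, and $-\mat{r}$.
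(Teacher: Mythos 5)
Your proposal is correct and follows essentially the same route as the paper's own proof: expand the score over the $2N_eN_r$ blocks, collapse to the two blocks at $\alpha(k,i)$ and $\beta(k,i)$ via Lemma~\ref{lem:sum-of-remaining-terms-are-null} and Corollary~\ref{cor:alpha-beta-terms-are-equal}, evaluate the surviving term using $\iota(\alpha(k,i))=i$, $\kappa(\alpha(k,i))=k$ and the identities $(\mat{p}\circ\mat{p})\mat{r}\transpose=4\Delta^3$, $(\mat{p}\circ\mat{r})\mat{r}\transpose=0$, and set $\Delta=1/2$. All the arithmetic and index bookkeeping check out, so nothing is missing.
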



\begin{proof}
  Let
  $\Delta = 1/2$
  and let
  $ (\mat{A}\binary , \mat{B}\binary , \mat{C}\binary ) = \text{BCP}_\Delta^*(\mat{\mathcal{X}}) $
  given by Definition~\ref{def:bcp-faithful-encoding}.
  We show that these matrices indeed satisfy Eq.~\eqref{eq:b-cp-model}.

  For any $i, j \in [N_e]$ and $ k \in [N_r] $,
  the score $\theta_{i j k}$ for triple $(e_i, e_j, r_k)$ is:
  \begin{align}
    \theta_{ijk} & = ( \mat{a}_{i:}\binary \circ \mat{b}_{j:}\binary )\mat{c}_{k:}\transposeb \nonumber \\
                 & = \sum_{\gamma \in [2 N_e N_r]} (\mat{a}_{i\gamma}\binary \circ \mat{b}_{j\gamma}\binary )\mat{c}_{k\gamma}\transposeb \nonumber \\
                 & = (\mat{a}_{i\alpha(k,i)}\binary \circ \mat{b}_{j\alpha(k,i)}\binary )\mat{c}_{k\alpha(k,i)}\transposeb \nonumber \\
                 & \qquad +
                   \underbrace{
                   (\mat{a}_{i\beta(k,i)}\binary \circ \mat{b}_{j\beta(k,i)}\binary )\mat{c}_{k\beta(k,i)}\transposeb
                   }_{
                   =(\mat{a}_{i\alpha(k,i)}\binary \circ \mat{b}_{j\alpha(k,i)}\binary )\mat{c}_{k\alpha(k,i)}\transposeb \text{ by Corollary~\ref{cor:alpha-beta-terms-are-equal}}
                   }
                   +
                   \underbrace{
                   \sum_{\gamma \neq \alpha(k,i), \beta(k,i)} \!\!\!\! (\mat{a}_{i\gamma}\binary \circ \mat{b}_{j\gamma}\binary )\mat{c}_{k\gamma}\transposeb
                   }_{
                   =0 \text{ by Lemma~\ref{lem:sum-of-remaining-terms-are-null}}
                   } \nonumber \\
                 & = 2(\mat{a}_{i\alpha(k,i)}\binary \circ \mat{b}_{j\alpha(k,i)}\binary ) \mat{c}_{k\alpha(k,i)}\transposeb . \label{eq:score-breakdown}
  \end{align}
  Let $\mat{p}, \mat{q}, \mat{r} \in \{ +\Delta, -\Delta \}^{1\times 4}$ be as given by Eqs.~\eqref{eq:p}--\eqref{eq:r}.
  For any $i, j \in [N_e]$ and $ k \in [N_r] $,
  if $x_{ijk}=1$,
  $\mat{b}_{j\alpha(k,i)}\binary  = \mat{p}$
  by Eq.~\eqref{eq:vector-b},
  and the following equation holds:
  \begin{align}
    \theta_{ijk}
        & = 2( \mat{a}_{i\alpha(k,i)}\binary \circ \mat{b}_{j\alpha(k,i)}\binary  ) \mat{c}_{k\alpha(k,i)}\transposeb &  & \because \text{ Eq.~\eqref{eq:score-breakdown}} \nonumber                                                          \\
        & = 2( \mat{p} \circ \mat{p}) \mat{r}\transpose                                                               &  & \because \text{ Lemma~\ref{lem:faithful-encoding-periodicity}\ref{itm:a-alpha}, \ref{itm:c-periodicity}} \nonumber \\
        & = 8\Delta^3  \nonumber                                                                                                                                                                                                              \\
        & = 1 \; (= x_{ijk}).                                                                                         &  & \because \Delta = 1/2 \label{eq:sign1}
  \end{align}
  And if $x_{ijk}=0$, 
  $\mat{b}_{j\alpha(k,i)}\binary  = \mat{r}$,
  and we have:
  \begin{align}
    \theta_{ijk}
        & = 2(\mat{a}_{i\alpha(k,i)}\binary \circ \mat{b}_{j\alpha(k,i)}\binary ) \mat{c}_{k\alpha(k,i)}\transposeb   &  & \because \text{ Eq.~\eqref{eq:score-breakdown}} \nonumber                                                          \\
        & = 2(\mat{p} \circ \mat{r}) \mat{r}\transpose                                                                &  & \because \text{ Lemma~\ref{lem:faithful-encoding-periodicity}\ref{itm:a-alpha}, \ref{itm:c-periodicity}} \nonumber \\
        & = 0 \; (= x_{ijk}) . \label{eq:sign2}
  \end{align}
  By Eqs.~\eqref{eq:sign1} and \eqref{eq:sign2}, $ x_{ijk} = \theta_{ijk} $
  holds irrespective of the value of $x_{ijk}$.
  Hence,
  $
  \mat{\mathcal{X}}
  =
  \sum_{d \in [D]} \mat{a}_{d}\binary  \otimes \mat{b}_{d}\binary  \otimes \mat{c}_{d}\binary 
  $
  where 
  $D = 8 N_e N_r$.
\end{proof}




\bibliographystyle{elsarticle-harv}
\bibliography{ref}


\end{document}